\documentclass[sigconf]{acmart}

\def\BibTeX{{\rm B\kern-.05em{\sc i\kern-.025em b}\kern-.08emT\kern-.1667em\lower.7ex\hbox{E}\kern-.125emX}}
    
\copyrightyear{2022}
\acmYear{2022}
\setcopyright{rightsretained}

% \acmConference[KDD '22] {Proceedings of the 28th ACM SIGKDD Conference on Knowledge Discovery and Data Mining}{August 14--18, 2022}{Washington, DC, USA.}
% \acmBooktitle{Proceedings of the 28th ACM SIGKDD Conference on Knowledge Discovery and Data Mining (KDD '22), August 14--18, 2022, Washington, DC, USA}
% \acmPrice{}
% \acmISBN{978-1-4503-9385-0/22/08}
% \acmDOI{10.1145/3534678.3539478}
% \settopmatter{printacmref=true}

\acmConference[KDD '22]{Proceedings of the 28th ACM SIGKDD Conference on Knowledge Discovery and Data Mining}{August 14--18, 2022}{Washington, DC, USA}
\acmBooktitle{Proceedings of the 28th ACM SIGKDD Conference on Knowledge Discovery and Data Mining (KDD '22), August 14--18, 2022, Washington, DC, USA}
\acmISBN{978-1-4503-9385-0/22/08}
\acmDOI{10.1145/3534678.3539478}

% Overwrite copyright field outside from cls file.
\usepackage{etoolbox}
\makeatletter
\patchcmd{\maketitle}{\@copyrightpermission}{
   \begin{minipage}{0.3\columnwidth}
     \href{https://creativecommons.org/licenses/by/4.0/}{\includegraphics[width=0.90\textwidth]{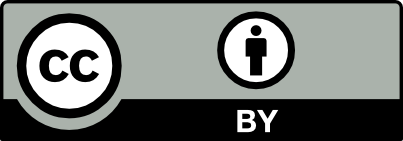}}
   \end{minipage}\hfill
   \begin{minipage}{0.7\columnwidth}
     \href{https://creativecommons.org/licenses/by/4.0/}{This work is licensed under a Creative Commons Attribution International 4.0 License.}
   \end{minipage}

   \vspace{5pt}
}{}{}

\makeatother

\usepackage{makecell}

\usepackage{amsmath}
\usepackage{subfig}
\usepackage{hhline}
\usepackage{multicol}
\usepackage{mathtools}
\usepackage{tabularx}
\usepackage{paralist}
\usepackage{bm}
\usepackage{balance}
\usepackage{multirow}
\usepackage{amsthm}
\usepackage{graphicx}
\usepackage{psfrag}
\usepackage{blindtext}
\usepackage{color}
\usepackage{url}
\usepackage{amsfonts}
\usepackage{enumerate}
\usepackage{booktabs} % Allows the use of \toprule, \midrule and \bottomrule in tables
\usepackage{geometry}
\usepackage{marginnote}
\usepackage{algorithm}
\usepackage[noend]{algpseudocode}
\usepackage{todonotes}
\algnewcommand\algorithmicinput{\textbf{Input:}}
\algnewcommand\algorithmicoutput{\textbf{Output:}}
\algnewcommand\Input{\item[\algorithmicinput]}%
\algnewcommand\Output{\item[\algorithmicoutput]}%

\newcommand{\excise}[1]{}%{$\star$\textsc{#1}$\star$}

\newtheorem{theorem}{Theorem}
\newtheorem{lemma}{Lemma}
\newtheorem{proposition}{Proposition}
{\bfseries}{\normalfont}
{\bfseries}{\normalfont}
%\newtheorem{defn}[thm]{Definition}
%\newtheorem{conv}[thm]{Convention}

%\spnewtheorem{definition}[definition]{Example}{\bfseries}{\itshape}

% \newtheorem{definition}[thm]{Definition}{\bfseries}{\normalfont}
\theoremstyle{definition}
\newtheorem{definition}{Definition}[section]

%For numbered lists with arabic 1. 2. 3. numbering

%For separated lists with consecutive numbering

% \newcommand{\1}{\textbf 1}

%\newcommand{\prob}{\mbox{\rm Prob}\,}

% \newcommand{\C}{\mathbb C}

% \newcommand\kk{\Bbbk}

% \newcommand\qq{{\mathfrak q}}

% \newcommand\vv[1]{{\mathbf{#1}}}

%roman font words for math mode

%\newcommand\conv{\mathrm{conv}}

%math symbols without arguments

\newcommand{\valuefunction}{f_v}
\newcommand{\reals}{\mathbb{R}}

\newcommand{\bellop}{\mathcal{T}}

\newcommand{\affinesev}{H_{s_1,..,s_k}^{\pi}}

\newcommand{\agree}{Y^{\pi}_{s_1,..,s_k}}

\newcommand{\policies}{\mathcal{P}(\actions)^\states}

\newcommand{\gvec}{\succcurlyeq}

  {\begin{enumerate}%
    \setlength{\itemsep}{2pt}%
    \setlength{\parskip}{0pt}}%
  {\end{enumerate}}

\newcommand{\cbit}{\begin{compactitem}}
	\newcommand{\ceit}{\end{compactitem}}
\newcommand{\cben}{\begin{compactenum}}
	\newcommand{\ceen}{\end{compactenum}}

\newcommand{\bal}{\begin{align}}
\newcommand{\ean}{\end{align}}
\newcommand{\bit}{\begin{itemize}}
\newcommand{\eit}{\end{itemize}}
\newcommand{\ben}{\begin{enumerate}}
\newcommand{\een}{\end{enumerate}}
\newcommand{\beq}{\begin{equation}}
\newcommand{\eeq}{\end{equation}}

\newcommand{\bQ}{\mathbf{Q}}

\newcommand{\bw}{\mathbf{w}}

\newcommand{\bwiT}{\mathbf{w}_i^{\top}}

\newcommand{\bq}{\mathbf{q}}
\newcommand{\bone}{\mathbf{1}}

\newcommand{\bqi}{\mathbf{q}_i}

\newcommand{\hide}[1]{}

\newcommand{\rewards}{\mathcal{R}}
\newcommand{\transitions}{\mathcal{P}}
\newcommand{\actions}{\mathcal{A}}
\newcommand{\states}{\mathcal{S}}
\newcommand{\realset}{\mathbb{R}}
\newcommand{\E}{\mathop{\mathbb{E}}}
\newcommand{\cbar}{\, | \,}
\def \Ppi {P^\pi}
\newcommand{\Pdelta}{P^\delta}
\def \rpi {r^\pi}
\newcommand{\rdelta}{r^\delta}
\def \Vpi {V^\pi}
\newcommand{\policyspace}{\mathcal{P}(\actions)^\states}
\newcommand{\valuespace}{\mathcal{V}}

\newcommand{\Vpiprime}{V^{\pi'}}
\newcommand{\Vpik}{V^{\pi^{(k)}}}

\newcommand{\Vstar}{V^*}

\newcommand{\Ppiprime}{P^{\pi'}}

\newcommand{\Pstar}{P^*}
\newcommand{\rpiprime}{r^{\pi'}}

\newcommand{\Tstar}{\mathcal{T}^*}

\newcommand{\Dk}{\Delta^{*}_k}
\newcommand{\Dkmo}{\Delta^{*}_{k-1}}
\newcommand{\Dz}{\Delta^{*}_0}

\DeclareMathOperator*{\argmax}{\arg\!\max}
\newcommand{\norm}[1]{\left\lVert#1\right\rVert}
\newcommand{\norminf}[1]{\norm#1_{\infty}}
\newcommand{\piki}[2]{\pi^{(#1)}_{#2}}
\newcommand{\bQki}[2]{\mathbf{Q}^{(#1)}_{#2}}

\newcommand{\Pki}[2]{P^{(#1)}_{#2}}
\newcommand{\Vki}[2]{V^{(#1)}_{#2}}
\newcommand{\Tki}[2]{\bellop^{(#1)}_{#2}}
\newcommand{\rki}[2]{r^{(#1)}_{#2}}

\newcommand{\Vdelta}{V^{\delta}}

\def \agreeone {Y^\pi_{\states \setminus \{s\}}}

\newcommand{\parentheses}[1]{\left( #1 \right)}
\newcommand{\bigO}[1]{\mathcal{O}\parentheses{#1}}

\begin{document}

\title{Geometric Policy Iteration for Markov Decision Processes}

\author{Yue Wu}
\affiliation{%
  \institution{University of California, Davis}
  \city{}
  \state{}
  \country{}
}
\email{yvwu@ucdavis.edu}

% \authornotemark[1]
\author{Jes\'us A. De Loera}
\affiliation{%
  \institution{University of California, Davis}
  \city{}
  \state{}
  \country{}
}
\email{deloera@math.ucdavis.edu}

\begin{abstract}
Recently discovered polyhedral structures of the value function for finite discounted Markov decision processes (MDP) shed light on understanding the success of reinforcement learning. We investigate the value function polytope in greater detail and characterize the polytope boundary using a hyperplane arrangement. We further show that the value space is a union of finitely many cells of the same hyperplane arrangement, and relate it to the polytope of the classical linear programming formulation for MDPs. Inspired by these geometric properties, we propose a new algorithm, \emph{Geometric Policy Iteration} (GPI), to solve discounted MDPs. GPI updates the policy of a single state by switching to an action that is mapped to the boundary of the value function polytope, followed by an immediate update of the value function. This new update rule aims at a faster value improvement without compromising computational efficiency. Moreover, our algorithm allows asynchronous updates of state values which is more flexible and advantageous compared to traditional policy iteration when the state set is large. We prove that the complexity of GPI achieves the best known bound $\bigO{\frac{|\actions|}{1 - \gamma}\log \frac{1}{1-\gamma}}$ of policy iteration and empirically demonstrate the strength of GPI on MDPs of various sizes.
\end{abstract}

\begin{CCSXML}
<ccs2012>
   <concept>
       <concept_id>10010147.10010257.10010293.10010316</concept_id>
       <concept_desc>Computing methodologies~Markov decision processes</concept_desc>
       <concept_significance>500</concept_significance>
       </concept>
 </ccs2012>
\end{CCSXML}

\ccsdesc[500]{Computing methodologies~Markov decision processes}

\keywords{Markov Decision Processes, Policy Iteration, Polytopes}

\maketitle

%%%%%%%%%%%%%%%%%%%%%%%%%%%%%%%%%%%%%%%%%%%%%%%%%%%%%%
\section{Introduction}
\label{sec:introduction}

The Markov decision process~(MDP) is the mathematical foundation of reinforcement learning~(RL) which has achieved great empirical success in sequential decision problems. Despite RL's success, new mathematical properties of MDPs are to be discovered to better theoretically understand RL algorithms. In this paper, we study the geometric properties of discounted MDPs with finite states and actions, and propose a new value-based algorithm inspired by their polyhedral structures. 

A large family of methods for solving MDPs is based on the notion of (state) values. The strategy of these methods is to maximize the values, then extract the optimal policy from the optimal values. One classic method is value iteration~\cite{howard60dynamic, Bertsekas1987_VI} in which values are greedily improved to optimum using the Bellman operator. It is also well known that the optimal values can be solved by linear programming~(LP)~\cite{puterman94markov} which attracts a lot of research interest due to its mathematical formulation. The most efficient algorithms in practice are often variants of policy iteration~\cite{howard60dynamic} which facilitates the value improvement with policy updates. The \textbf{value function}, which maps policies to the value space, is central to our analysis throughout, and it plays a key role in understanding how values are related to policies from a geometric perspective.

Although policy iteration and its variants are very efficient in practice, their worst-case complexity was long believed exponential~\cite{mansour1999complexity}. The major breakthrough was made by \citet{ye2011} where the author proved that both policy iteration and LP with Simplex method~\cite{danzigsimplex} terminate in $\bigO{\frac{|\states||\actions|}{1-\gamma}\log \frac{|\states|}{1-\gamma}}$. The author first proved that the Simplex method with the most-negative-reduced-cost pivoting rule is strongly polynomial in this situation. Then, a variant of policy iteration called simple policy iteration was shown to be equivalent to the Simplex method. \citet{hansen2013strategy} later improved the complexity of policy iteration by a factor of $|S|$. The best known complexity of policy iteration is $\bigO{\frac{|\actions|}{1-\gamma}\log \frac{1}{1-\gamma} }$ proved by \citet{scherrer2016improved}.

In the LP formulation, the state values are optimized through the vertices of the LP feasible region which is a convex polytope. Surprisingly, it was recently discovered that the space of the value function is a (possibly non-convex) polytopes~\cite{Dadashi2019value}. We call such object the \textbf{value function polytope} denoted by $\valuespace$. As opposed to LP, the state values are navigated through $\valuespace$ in policy iteration. Moreover, the \emph{line theorem}~\cite{Dadashi2019value} states that the set of policies that only differ in one state is mapped onto the same line segment in the value function polytope. This suggests the potential of new algorithms based on single-state updates. 

Our first contribution is on the structure of the value function polytope $\valuespace$. Specifically, we show that a hyperplane arrangement $H_{MDP}$ is shared by $\valuespace$ and the polytope of the linear programming formulation for MDPs. We characterize these hyperplanes using the Bellman equation of policies that are deterministic in a single state. We prove that the boundary of the value function polytope $\partial \valuespace$ is the union of finitely many (convex polyhedral) cells of $H_{MDP}$. Moreover, each full-dimensional cell of the value function polytope is contained in the union of finitely many full-dimensional cells defined by $H_{MDP}$. We further conjecture that the cells of the arrangement cannot be partial, but they have to be entirely contained in the value function polytope.

The learning dynamic of policy iteration in the value function polytope shows that every policy update leads to an improvement of state values along one line segment of $\valuespace$. Based on this, we propose a new algorithm, \textbf{geometric policy iteration}~(GPI), a variant of the classic policy iteration with several improvements. First, policy iteration may perform multiple updates on the same line segment. GPI avoids this situation by always reaching an endpoint of a line segment in the value function polytope for every policy update. This is achieved by efficiently calculating the true state value of each potential policy update instead of using the Bellman operator which only guarantees a value improvement. Second, GPI updates the values for all states immediately after each policy update for a single state, which makes the value function monotonically increasing with respect to every policy update. Last but not least, GPI can be implemented in an asynchronous fashion. This makes GPI more flexible and advantageous over policy iteration in MDPs with a very large state set. 

We prove that GPI converges in $\bigO{\frac{|\actions|}{1-\gamma}\log \frac{1}{1-\gamma}}$ iterations, which matches the best known bound for solving finite discounted MDPs. Although using a more complicated strategy for policy improvement, GPI maintains the same $\bigO{|\states|^2|\actions|}$ arithmetic operations in each iteration as policy iteration. We empirically demonstrate that GPI takes fewer iterations and policy updates to attain the optimal value.

\subsection{Related Work}

One line of work related to this paper is on the complexity of the policy iteration. For MDPs with a fixed discount factor, the complexity of policy iteration has been improved significantly~\cite{Littman94, ye2011, Ye2013Post, hansen2013strategy, scherrer2016improved}. There are also positive results reported on stochastic games (SG). \citet{hansen2013strategy} proved that a two-player turn-based SG can be solved by policy iteration in strongly polynomial time when the discount factor is fixed. \citet{Akian2013PolicyIF} further proved that policy iteration is strongly polynomial in mean-payoff SG with state-dependent discount factors under some restrictions. In terms of more general settings, the worst-case complexity can still be exponential~\cite{mansour1999complexity, Fearnley, Hollanders2012, Hollanders2016}. Another line of related work studies the geometric properties of MDPs and RL algorithms. The concept of the value function polytope in this paper was first proposed in \citet{Dadashi2019value}, which was also the first recent work studying the geometry of the value function. Later, \citet{Bellemare2019Geometric} explored the direction of using these geometric structures as auxiliary tasks in representation learning in deep RL. \citet{policyimprovepath} also aimed at improving the representation learning by shaping the policy improvement path within the value function polytope. The geometric perspective of RL also contributes to unsupervised skill learning where no reward function can be accessed~\cite{unsupervised_skill_learning}. Very recently, \citet{geometryPOMDP} analyzed the geometry of state-action frequencies in partially observable MDPs, and formulated the problem of finding the optimal memoryless policy as a polynomial program with a linear objective and polynomial constraints. The geometry of the value function in robust MDP is also studied in \citet{geometryRMDP}.

% The rest of the paper is organized as follows. Section~\ref{sec:prelim} introduces the fundamentals of the MDP. We show the relation of the value function polytope $\valuespace$ and LP polytope as well as the analysis of the boundary of $\valuespace$ in Section~\ref{sec:boundary}. The detail of GPI is presented in Section~\ref{sec:gpi}.

%%%%%%%%%%%%%%%%%%%%%%%%%%%%%%%%%%%%%%%%%%%%%%%%%%%%%%
\section{Preliminaries}

An MDP has five components $\mathcal{M} = \langle\states , \actions, \rewards, \transitions, \gamma\rangle$ where 
        $\states$ and $\actions$ are finite state set and action set,
        $\transitions: \states \times \actions \to \Delta(\states)$ is the transition function with $\Delta(\cdot)$ denoting the probability simplex.
        $\rewards: \states \times \actions \to \realset$ is the reward function and
        $\gamma = [0,1)$ is the discount factor that represents the value of time.

A policy $\pi: \states \to \Delta(\actions)$ is a mapping from states to distributions over actions. The goal is to find a policy that maximizes the cumulative sum of rewards.

Define $V^{\pi} \in \realset^{|\states|}$ as the vector of state values. $V^{\pi}(s)$ is then the expected cumulative reward starting from a particular state $s$ and acting according to $\pi$:
\begin{equation*}
V^{\pi}(s) = \E\nolimits_{P^\pi}\Big(\sum^{\infty}_{i=0} \gamma^{i}
\rewards(s_i, a_i) \cbar s_0=s \Big). \label{eq:v_func_cumulative_reward}
\end{equation*}

The \textbf{Bellman equation}  \cite{Bellman:DynamicProgramming} connects the value $V^{\pi}$ at a state $s$ with the value at the subsequent states when following $\pi$:
\begin{align} V^\pi(s) = \E\nolimits_{P^\pi}\Big(\rewards(s, a) + \gamma V^\pi(s')\Big).
\label{eq:bellman}
\end{align}
Define $\rpi$ and $\Ppi$ as follows.
\begin{align*}
    \rpi(s) &= \sum_{a \in \actions} \pi(a \cbar s) \rewards(s, a), \\
    \Ppi(s' \cbar s) &= \sum_{a \in \actions} \pi(a \cbar s) \transitions (s' \cbar s, a),
\end{align*}
% where $\actions_s$ denotes the set of actions available in $s$.
Then, the Bellman equation for a policy $\pi$ can be expressed in matrix form as follows.
\begin{align}
    V^\pi &= \rpi + \gamma \Ppi V^\pi \nonumber\\
    & = (I - \gamma \Ppi)^{-1} \rpi. \label{eq:bellman_eq}
\end{align}
% The space of policies $\policies$ is the Cartesian product of simplices that we can express as a space of $|\states| \times |\actions|$ matrices. 

Under this notation, we can define the Bellman operator $\bellop^{\pi}$ and the optimality Bellman operator $\bellop^*$ for an arbitrary value vector $V$ as follows.
\begin{align*}
    \bellop^{\pi} V &= \rpi + \gamma \Ppi V, \\
    \bellop^* V &= \max_{\pi} \bellop^{\pi} V.
\end{align*}
$V$ is optimal if and only if $V = \bellop^* V$. MDPs can be solved by \textbf{value iteration}~(VI)~\cite{Bellman:DynamicProgramming} which consists of the repeated application of the optimality Bellman operator $V^{(k+1)} := \bellop^* V^{(k)}$ until a fixed point has been reached.

Let $\policyspace$ denote the space of all policies, and $\valuespace$ denote the space of all state values. We define the \textbf{value function} $f_v(\pi): \policyspace \to \valuespace$ as
\begin{equation}
    f_v(\pi) = (I - \gamma \Ppi)^{-1} \rpi.
    \label{eq:value_func_mapping}
\end{equation}
The value function $\valuefunction$ is fundamental to many algorithmic solutions of an MDP. \textbf{Policy iteration}~(PI)~\cite{howard60dynamic} repeatedly alternates between a policy evaluation step and a policy improvement step until convergence. In the policy evaluation step, the state values $\Vpi$ of the current policy $\pi$ is evaluated which involves solving a linear system (Eq.~\eqref{eq:bellman_eq}). In the policy improvement step, PI iterates over all states and update the policy by taking a greedy step using the optimality Bellman operator as follows.
\begin{equation*}
    \pi'(s) \in \argmax_{a \in \actions}\left\{ \rewards(s, a) + \gamma \sum_{s'}\transitions(s' \cbar s, a) \Vpi(s') \right\},\, \forall s \in \states.
\end{equation*}
\textbf{Simple policy iteration}~(SPI) is a variant of policy iteration. It only differs from policy iteration in the policy improvement step where the policy is only updated for the state-action pair with the largest improvement over the following advantage function.
\begin{equation*}
    \tilde{A}(s, a) = \rewards(s, a) + \gamma \sum_{s'}\transitions(s' \cbar s, a) \Vpi(s') - \Vpi(s). 
\end{equation*}
SPI selects a state-action pair from $\argmax_{s, a} \tilde{A}(s, a)$ then updates the policy accordingly. 

\subsection{Geometry of the Value Function}
While the space of policies $\policies$ is the Cartesian product of $|\states|$ probability simplices, \citet{Dadashi2019value} proved that the value function space is a possibly non-convex polytope~\cite{Ziegler_polytope}. Figure~\ref{fig:vfp_line} shows a convex and a non-convex $f_v$ polytopes of 2 MDPs in blue regions. The proof is built upon the line theorem which is an equally important geometric property of the value space. The line theorem depends on the following definition of policy determinism.

\begin{definition}[Policy Determinism]
A policy $\pi$ is 
\begin{itemize}
    \item \emph{$s$-deterministic} for $s\in \states$ if it selects one concrete action for sure in state $s$, i.e., $\pi(a|s)\in \{0, 1\},\, \forall a$;
    \item \emph{deterministic} if it is $s$-deterministic for all $s \in \states$.
\end{itemize}
\end{definition}

\begin{figure}[h]
    \centering
    \captionsetup[subfloat]{farskip=0pt,captionskip=-1pt}
    \subfloat[\label{fig:vfp_line_left}]{\includegraphics[width=0.51\linewidth]{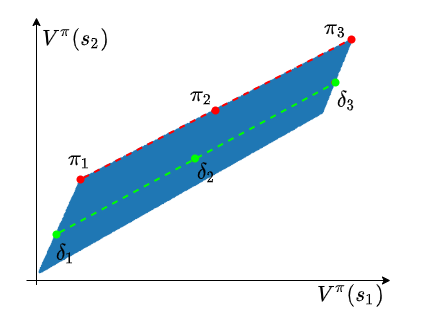}} \hspace{-1.2em}
    \subfloat[\label{fig:vfp_line_right}]{\includegraphics[width=0.51\linewidth]{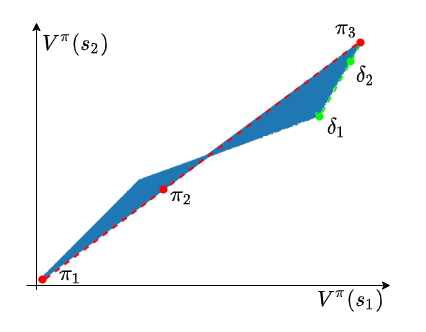}}\\[-2ex]
    \caption{The blue regions are the value spaces of 2 MDPs with $|\states|=2$ and $|\actions|=2$. The regions are obtained by plotting $f_v$ of $50,000$ random policies. (a): Both $\{\pi_i\}$ and $\{\delta_i\}$ agree on $s_1$ but differ in $s_2$. $\pi_1$ and $\pi_3$ are deterministic. $\pi_2$ is $s_1$-deterministic. $\delta_1$ and $\delta_3$ are $s_2$-deterministic. (b): $\{\pi_i\}$ and $\{\delta_i\}$ agree on $s_1$ and $s_2$, respectively. $\pi_1$, $\pi_3$, and $\delta_1$ are deterministic while $\pi_2$ and $\delta_2$ are $s_1$ and $s_2$-deterministic, respectively.}
    \label{fig:vfp_line}
\end{figure}

The line theorem captures the geometric property of a set of policies that differ in only one state. Specifically, we say two policies $\pi_1, \pi_2$ \emph{agree} on states $s_1,.., s_k \in \states$ if 
$\pi_1(\cdot \cbar s_i) = \pi_2(\cdot \cbar s_i)$ for each $s_i$, $i = 1, \dots, k$.
For a given policy $\pi$, we denote by $Y^\pi_{s_1, \dots, s_k} \subseteq \policies$ the set of policies that agree with $\pi$ on 
$s_1, \dots, s_k$; we will also write $\agreeone$ to describe the set of policies that agree with $\pi$ on all states except $s$. When we keep the probabilities fixed at all but state $s$, the functional $f_v$ draws a line segment which is oriented in the positive orthant (that is, one end dominates the other). Furthermore, the endpoints of this line segment are $s$-deterministic 
policies. 

The line theorem is stated as follows:
\begin{theorem}[Line theorem~\cite{Dadashi2019value}]
\label{thm:line}
Let $s$ be a state and $\pi$ a policy.
Then there are two $s\text{-deterministic}$ policies in $\agreeone$, denoted $\pi_l, \pi_u$, which
bracket the value of all other policies $\pi' \in \agreeone$: \begin{equation*}
    f_v(\pi_l) \preccurlyeq f_v(\pi') \preccurlyeq f_v(\pi_u).
\end{equation*}
\end{theorem}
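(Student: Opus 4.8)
The plan is to parametrize the family $\agreeone$ by the single free quantity $\mu := \pi'(\cdot\cbar s)\in\Delta(\actions)$ and thereby reduce the vector inequality to a scalar optimization over a simplex. First I would observe that for every $\pi'\in\agreeone$ the matrix $\Ppiprime$ differs from $\Ppi$ only in row $s$, and $\rpiprime$ only in entry $s$; consequently the $|\states|-1$ Bellman equations indexed by $s'\neq s$ are \emph{identical} for all members of $\agreeone$, carrying $\Vpiprime(s)$ as their only ``external'' parameter. Collecting these rows into the substochastic system $v=r+\gamma p_s\,x+\gamma\tilde P\,v$, where $x=\Vpiprime(s)$, and $r$, $p_s$, $\tilde P$ are the restrictions of $\rpi$, of the $s$-column of $\Ppi$ (the probabilities of jumping to $s$), and of $\Ppi$ to states $\neq s$, I would solve to obtain the affine relation
\[
 \Vpiprime(s') \;=\; \alpha(s') + \beta(s')\,x,\qquad s'\neq s,
\]
with $\alpha=(I-\gamma\tilde P)^{-1}r$ and $\beta=\gamma(I-\gamma\tilde P)^{-1}p_s$. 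The crucial structural point is that $\alpha$ and $\beta$ depend only on the fixed part of the policy and \emph{not} on $\mu$, so a single affine map sends the scalar $\Vpiprime(s)$ to the entire vector $f_v(\pi')$.

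The second step is a monotonicity reduction. Since $(I-\gamma\tilde P)^{-1}=\sum_{k\ge 0}(\gamma\tilde P)^k$ is entrywise nonnegative and $p_s\succcurlyeq 0$, we get $\beta\succcurlyeq 0$; hence the common map $x\mapsto(\alpha+\beta x,\,x)$ is coordinatewise nondecreasing. Therefore, if I can exhibit $s$-deterministic policies $\pi_l,\pi_u\in\agreeone$ with
\[
 V^{\pi_l}(s)\;\le\;\Vpiprime(s)\;\le\;V^{\pi_u}(s)\qquad\text{for all }\pi'\in\agreeone,
\]
then applying this single monotone map to the scalar inequalities yields the desired $f_v(\pi_l)\preccurlyeq f_v(\pi')\preccurlyeq f_v(\pi_u)$. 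This collapses the whole theorem to extremizing one real number over $\Delta(\actions)$.

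For the third step I would substitute the affine relation back into the remaining Bellman equation at $s$. Writing $\rpiprime(s)=\sum_a\mu(a)\rewards(s,a)$ and $\Ppiprime(\cdot\cbar s)=\sum_a\mu(a)\transitions(\cdot\cbar s,a)$ and using $\Vpiprime(s')=\alpha(s')+\beta(s')x$, the fixed-point equation for $x=\Vpiprime(s)$ becomes linear in $x$ with coefficients linear in $\mu$, whose solution is the linear-fractional function
\[
 x^\ast(\mu)\;=\;\frac{\sum_a \mu(a)\,c_a}{\,1-\sum_a \mu(a)\,d_a\,},
\]
where $c_a=\rewards(s,a)+\gamma\sum_{s'\neq s}\transitions(s'\cbar s,a)\alpha(s')$ and $d_a=\gamma\bigl[\transitions(s\cbar s,a)+\sum_{s'\neq s}\transitions(s'\cbar s,a)\beta(s')\bigr]$. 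A short computation using $\beta(s')\le\gamma<1$ and $\sum_{s'}\transitions(s'\cbar s,a)=1$ shows $d_a\le\gamma$, so the denominator is at least $1-\gamma>0$ for every $\mu\in\Delta(\actions)$, keeping $x^\ast$ finite with a sign-definite denominator. Because a linear-fractional function with a sign-definite denominator is simultaneously quasiconvex and quasiconcave on a convex set, it attains both its maximum and its minimum over the simplex $\Delta(\actions)$ at vertices, i.e.\ at point masses on single actions, which are precisely the $s$-deterministic policies. Choosing $\pi_u$ and $\pi_l$ at the maximizing and minimizing vertices and combining with the monotonicity of the second step finishes the proof.

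The main obstacle I anticipate is not any single calculation but pinning down the structural observations exactly: that $\alpha,\beta$ are independent of $\mu$ (so one monotone map governs the whole family), that $\beta\succcurlyeq 0$ (so the map is order-preserving), and that $1-\sum_a\mu(a)d_a$ stays bounded away from $0$ (so the linear-fractional reduction to vertices is legitimate). Once these three facts are secured, the vertex-attainment of linear-fractional functions does the rest; the only quantitative input is the elementary bound $\beta(s')\le\gamma$, which follows from interpreting $\beta(s')$ as an expected discount accrued until first reaching $s$ and which simultaneously guarantees strict positivity of the denominator.
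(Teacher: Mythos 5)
Your proof is correct, and it takes a genuinely different route from the source of this statement: note that the paper itself does not prove Theorem~\ref{thm:line} at all but imports it from \citet{Dadashi2019value}, so the fair comparison is with the original argument there. That argument is structural: it first confines $f_v(\agreeone)$ to a one-dimensional affine set (the same column-span device $V^\pi + \mathrm{Span}(\cdot)$ of $(I-\gamma\Ppi)^{-1}$ that this paper reuses in proving Theorem~\ref{thm: vfp_boundary}), and then obtains the bracketing by $s$-deterministic endpoints via monotonicity of Bellman operators along policy mixtures $\alpha\pi_1+(1-\alpha)\pi_2$. You instead collapse the whole family to one scalar: freezing the $|\states|-1$ Bellman rows off $s$ gives $\Vpiprime=(\alpha+\beta x,\,x)$ with $\alpha,\beta$ independent of $\mu=\pi'(\cdot\cbar s)$ and $\beta=\gamma(I-\gamma\tilde P)^{-1}p_s\succcurlyeq 0$, and the remaining fixed-point row makes $x$ the linear-fractional function $\mu\mapsto c^\top\mu/(1-d^\top\mu)$ with denominator in $[1-\gamma,1]$, hence quasilinear, hence extremized at vertices of $\Delta(\actions)$, which are exactly the $s$-deterministic members of $\agreeone$. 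All three pillars you flagged check out: $\alpha,\beta$ depend only on the frozen rows; $\beta\succcurlyeq 0$ by the Neumann series, with $\beta(s')\le\gamma$ because $\sum_k(\tilde P^k p_s)(s')$ is a first-passage probability bounded by $1$; and $d_a\le\gamma\bigl[\transitions(s\cbar s,a)+\sum_{s'\ne s}\transitions(s'\cbar s,a)\bigr]=\gamma$ keeps the denominator sign-definite. Your route buys things the original does not give directly: an explicit closed form for $V^{\pi'}(s)$, the orientation of the segment in the positive orthant as the single statement $\beta\succcurlyeq 0$, the line itself (the image is $\{(\alpha+\beta x, x)\}$ over an interval of $x$, by continuity of the fractional map on the connected simplex), and a positivity certificate for rank-one-update denominators of the same kind as $1-\bw_a^\top\bq_s$ in Theorem~\ref{thm:sherman_morrison}, which the paper's GPI derivation uses without bounding. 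What the original route buys is generality: its affine-subspace and mixture lemmas apply to agreement on arbitrary $k$ states and underlie the path-connectivity facts quoted in Section~\ref{sec:vfp_boundary}, to which your one-dimensional reduction does not extend as cleanly. Two small points to make explicit in a final write-up: invertibility of $I-\gamma\tilde P$ (immediate since $\tilde P$ is substochastic, so the series $\sum_k(\gamma\tilde P)^k$ converges), and the one-line verification that a quasilinear function on a polytope attains both extrema at vertices, e.g.\ via $\{\mu: x^\ast(\mu)\le t\}=\{\mu: (c+td)^\top\mu\le t\}$ being a halfspace.
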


For both Figure~\ref{fig:vfp_line_left} and~\ref{fig:vfp_line_right}, we plot policies that agree on one state to illustrate the line theorem. The policy determinism decides if policies are mapped to a vertex, onto the boundary or inside the polytope.

%%%%%%%%%%%%%%%%%%%%%%%%%%%%%%%%%%%%%%%%%%%%%%%%%%%%%%
\section{The Cell Structure of the Value Function Polytope}
\label{sec:vfp_boundary}

% \begin{figure}[t]
%     \captionsetup[subfloat]{labelformat=empty}
%     \centering
%     \subfloat[]{\includegraphics[width=0.5\linewidth]{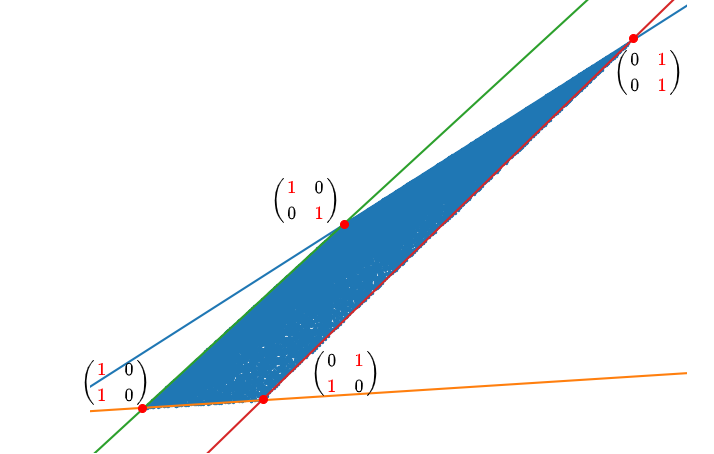}%
%     }
%     \hfil
%     \subfloat[]{\includegraphics[width=0.5\linewidth]{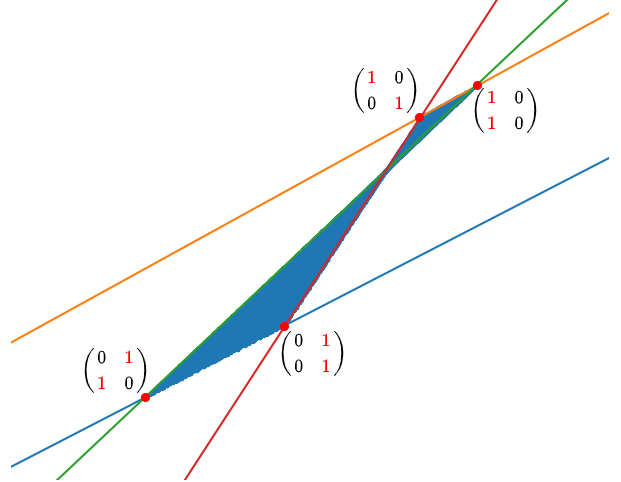}%
%     }
% \end{figure}

In this section, we revisit the geometry of the (non-convex) value function polytope presented in \citet{Dadashi2019value}. We establish a connection to linear programming formulations of the MDP which then can be adapted to show a finer description of cells in the value function polytope as unions of cells of a hyperplane arrangement. For more on hyperplane arrangements and their structure, see \citet{hyperplanes-intro}.

It is known since at least the 1990's that finding the optimal value function of an MDP can be formulated as a linear program (see for example \cite{puterman94markov, bertsekas96neurodynamic}). In the primal form, the feasible constraints are defined by $\{ V \in \reals^{|\states|} \; \big| \; V \gvec \bellop^*  V \}$, where $\bellop^*$ is the optimality Bellman operator. Concretely, the following linear program is well-known to be equivalent 
to maximizing the expected total reward in Eq.~\eqref{eq:v_func_cumulative_reward}. 
We call this convex polyhedron the \textbf{MDP-LP polytope} (because it is a linear programming form of the MDP problem).
    \begin{align*}
        \min_V & \quad \sum_{s} \alpha(s) V(s) \\
        \text{s.t.}  & \quad V(s) \ge \rewards(s, a) + \gamma \sum_{s'}\transitions(s' \cbar s, a) V(s'),\,\, \forall s \in \states, a \in \actions.
    \end{align*}
where $\alpha$ is a probability distribution over $\states$. 

Our main new observation is that the MDP-LP polytope and the value polytope are actually closely related, and one
can describe the regions of the (non-convex) value function polytope in terms of the (convex) cells of the arrangement.

\begin{theorem} Consider the hyperplane arrangement $H_{MDP}$, with $|\actions| |\states|$ hyperplanes,
consisting of those of the MDP polytope, i.e.,
\begin{equation*}
     H_{MDP}= \left\{V(s) = \rewards(s, a) + \gamma \sum_{s'}\transitions (s' \cbar s, a)V(s') \mid \forall s \in \states, a \in \actions \right\}.
\end{equation*}
\label{thm: vfp_boundary}
Then, the boundary of the value function polytope $\partial \valuespace$ is the union of finitely (convex polyhedral) cells of the arrangement $H_{MDP}$. Moreover, each full-dimensional cell of the value polytope is contained in the union of finitely many full-dimensional cells defined by $H_{MDP}$.
\end{theorem}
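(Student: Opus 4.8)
The plan is to replace the ``image of $\valuefunction$'' description of $\valuespace$ by an explicit system of piecewise-linear inequalities tied directly to the hyperplanes of $H_{MDP}$. Write $Q_V(s,a) = \rewards(s,a) + \gamma\sum_{s'}\transitions(s'\cbar s,a)V(s')$, so that each hyperplane of $H_{MDP}$ is exactly $\{V : V(s) = Q_V(s,a)\}$ and $V\mapsto Q_V(s,a)$ is affine. The central step is the characterization
\[
   V\in\valuespace \iff \min_{a\in\actions} Q_V(s,a)\le V(s)\le \max_{a\in\actions} Q_V(s,a)\ \text{ for every } s\in\states .
\]
The forward direction is immediate from the Bellman equation: if $V=\Vpi$ then $V(s)=\sum_a \pi(a\cbar s)Q_V(s,a)$ is a convex combination of the numbers $Q_V(s,a)$ and so lies between their min and max. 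For the converse, given such a $V$ I would, for each state independently, choose a distribution $\pi(\cdot\cbar s)$ supported on an argmin and an argmax action that realizes $V(s)$ as a convex combination of the $Q_V(s,a)$; this is possible exactly because $V(s)\in[\min_a Q_V(s,a),\max_a Q_V(s,a)]$. The resulting policy satisfies $(I-\gamma\Ppi)V=\rpi$ by construction, and since $I-\gamma\Ppi$ is invertible for $\gamma<1$ we get $V=(I-\gamma\Ppi)^{-1}\rpi=\Vpi\in\valuespace$. This is where the per-state convexity underlying the line theorem is really used, and it is the part that must be written carefully.

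Given the characterization, the boundary containment is short. Each defining function $V\mapsto V(s)-\min_a Q_V(s,a)$ and $V\mapsto \max_a Q_V(s,a)-V(s)$ is continuous, and its zero set lies in $\bigcup_a H_{s,a}$ because equality forces $V(s)=Q_V(s,a)$ for the extremal action. Hence if all $2|\states|$ inequalities are strict at $V$, they stay strict on a neighborhood, so $V\in\interior{\valuespace}$; contrapositively, every $V\in\partial\valuespace$ makes at least one inequality tight and therefore lies on some hyperplane, giving $\partial\valuespace\subseteq\bigcup_{s,a}H_{s,a}$.

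To upgrade this to ``a union of whole cells,'' I would argue chamber by chamber. Fix an open full-dimensional cell $C$ of $H_{MDP}$; since no hyperplane $H_{s,a}$ meets $C$, each affine function $V(s)-Q_V(s,a)$ has constant sign on $C$. Consequently the truth values of ``$\exists a:\, V(s)\ge Q_V(s,a)$'' and ``$\exists a:\, V(s)\le Q_V(s,a)$'' are constant on $C$, so by the characterization either $C\subseteq\valuespace$ or $C\cap\valuespace=\nothing$. As $\valuespace=\valuefunction(\policyspace)$ is a continuous image of a compact set it is closed, so $\valuespace$ equals the union of the closures of the chambers it contains. Its topological boundary, being the boundary of a union of closed cells of the arrangement, is then automatically a union of (in general lower-dimensional) cells of $H_{MDP}$, and every full-dimensional convex piece of $\valuespace$ is a union of such chambers. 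The same hyperplanes bound the MDP-LP polytope $\{V:V(s)\ge\max_a Q_V(s,a)\}$, which is what ties the two polyhedra to the common arrangement.

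The main obstacle is the converse half of the characterization together with clean cell bookkeeping. The functions $\min_a Q_V$ and $\max_a Q_V$ also have kinks along the loci $Q_V(s,a)=Q_V(s,a')$, which are not themselves hyperplanes of $H_{MDP}$; one must check that where such a kink meets $\partial\valuespace$ it does so only on an intersection $H_{s,a}\cap H_{s,a'}$ (a lower-dimensional cell of the arrangement), so that the boundary is literally a union of faces of $H_{MDP}$ rather than merely contained in the hyperplanes. Making the degenerate cases (all $Q_V(s,a)$ tied, non-full-dimensional cells) precise is the only delicate point; notably, the chamber argument actually yields the stronger ``all-or-nothing'' statement, so the same analysis also settles the conjecture mentioned above.
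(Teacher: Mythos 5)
Your proposal is correct, and it takes a genuinely different---and in fact stronger---route than the paper. The paper argues only by containment: it invokes Theorem~2 and Corollary~3 of Dadashi et al.\ (boundary points of $\valuespace$ arise from policies that are deterministic in at least one state), places those points on the corresponding hyperplanes $V(s) = \rewards(s,a)+\gamma\sum_{s'}\transitions(s'\cbar s,a)V(s')$, handles lower-dimensional cells via the affine subspaces $V^\pi + \mathrm{Span}(C^\pi_{k+1},\dots,C^\pi_{|\states|})$ spanned by columns of $(I-\gamma\Ppi)^{-1}$, and disposes of interior points by noting the arrangement partitions $\reals^{|\states|}$; it never obtains an exact description of $\valuespace$, which is precisely why the all-or-nothing cell property is left there as a conjecture. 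You instead prove the exact piecewise-linear characterization $V\in\valuespace \iff \min_{a} Q_V(s,a)\le V(s)\le \max_{a} Q_V(s,a)$ for every $s$, and your converse direction is airtight: the system $V=\rpi+\gamma\Ppi V$ is linear in $\pi(\cdot\cbar s)$ and decouples across states, and invertibility of $I-\gamma\Ppi$ for $\gamma<1$ forces $V=\Vpi$. This exhibits $\valuespace$ as a finite Boolean combination of the closed half-spaces bounded by the hyperplanes of $H_{MDP}$, so membership depends only on the sign vector of the affine forms $V(s)-Q_V(s,a)$; sign-constancy on relatively open cells then yields the theorem \emph{and} settles the paper's conjecture, which the paper explicitly could not do (``at the moment we can only guarantee\dots''). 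Two small repairs to your write-up: the worry about the kink loci $Q_V(s,a)=Q_V(s,a')$ is unnecessary, because the predicate ``there exists $a$ with $V(s)\ge Q_V(s,a)$ and there exists $a'$ with $V(s)\le Q_V(s,a')$'' is determined by the sign vector alone, with no reference to which action attains the extremum; and the sentence ``$\valuespace$ equals the union of the closures of the chambers it contains'' is not quite right for degenerate MDPs in which $\valuespace$ has lower-dimensional pieces not adjacent to any chamber of $\valuespace$---but your own argument repairs this verbatim by applying sign-constancy to relatively open cells of every dimension rather than only to full-dimensional chambers, after which closedness of $\valuespace$ (automatic here, since an intersection of finite unions of closed half-spaces is closed) upgrades open cells to closed ones.
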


\begin{proof}
%We can prove this by induction on the dimension of the cells of the value polytope.  
%Now assume by the induction hypothesis that all cells of dimension $k$ in the value polytope are union of $k$-cells of the arrangement $H_{MDP}$. 

Let us first consider a point $V^\pi$ being on the boundary of the value function polytope. Theorem 2 and Corollary 3 of \citet{Dadashi2019value}
demonstrated that the boundary of the space of value functions is a (possibly proper) subset of the ensemble of value functions of policies, where at least one state has a fixed deterministic choice for all actions.
Note that from the value function Eq.~\eqref{eq:value_func_mapping},  then the hyperplane 
\begin{equation*}
    V(s) = \rewards(s, a^s_l) + \gamma \sum_{s'}\transitions (s' \cbar s, a^s_l)V(s')
\end{equation*}
includes all policies taking policy $a^s_l = \pi_l(s)$ in state $s$. Thus the points of the boundary of the value function polytope are contained in the hyperplanes of $H_{MDP}$. Now we can see how the $k$-dimensional cells of the boundary are then in the intersections of the hyperplanes too.

%\begin{equation*}
%     \left\{V(s) =\rewards(s, a^s_l) + \gamma \sum_{s'}\transitions (s' \cbar s, a^s_l)V(s') \cbar \forall a \in \actions \right\}
%\end{equation*}

The zero-dimensional cells (vertices) are clearly a subset of the zero-dimensional cells of the arrangement $H_{MDP}$ because, by above results, the zero-dimensional cells are precisely in the intersection of $| \states |$ many hyperplanes from $H_{MDP}$, which is equivalent to choosing a fixed set of actions for all states. This corresponds to solving a linear system consisting of the hyperplanes that bound $\valuespace$ (same as Eq.~\eqref{eq:bellman_eq}).
But more generally, if we fix the policies for only $k$ states, the induced space lies in a $|\states|-k$ dimensional affine space. Consider a policy $\pi$ and $k$ states $s_1, \dots, s_k$, and write $C_{k+1}^\pi, \dots, C_{|\states|}^\pi$ for the columns of the matrix $(I - \gamma \Ppi)^{-1}$ corresponding to states \emph{other} than $s_1, \dots, s_k$. Define the affine vector space $\affinesev$
\begin{equation*}
    H^\pi_{s_1, \dots, s_k} = V^\pi + Span(C_{k+1}^\pi, \dots, C_{|\states|}^\pi) .
\end{equation*}
Now For a given policy $\pi$, we denote by $Y^\pi_{s_1, \dots, s_k} \subseteq \policies$ the set of policies which agree with $\pi$ on $s_1, \dots, s_k$;
Thus the value functions generated by $\agree$ are contained in the affine vector space $H^\pi_{s_1, \dots, s_k}$:
$    f_v(\agree) = \valuespace \cap \affinesev.$

The points of $H^\pi_{s_1, \dots, s_k}$ in one or more of the $H_{MDP}$ planes (each hyperplane is precisely fixing one policy action pair). This is the intersection of $k$ hyperplanes given by the following equations. 
\begin{equation*}
     \left\{V(s) = \rewards(s, a) + \gamma \sum_{s'}\transitions (s' \cbar s, a)V(s') \mid \forall s \in \{s_1,\ldots,s_k\}, a \in \actions \right\}. 
\end{equation*}
Thus we can be sure of the stated containment.

Finally, the only remaining case is when $V^\pi$ is in the interior of the value polytope. If that is the case, because $H_{MDP}$ partitions the entire Euclidean space, it must be contained in at least one of the full-dimensional cell of $H_{MDP}$. 
\qedhere
\end{proof}

\begin{figure}[h]
    \centering
    \captionsetup[subfloat]{farskip=0pt,captionskip=-1pt}
    \subfloat[\label{fig:vfp_lp_boundary}]{\includegraphics[width=0.45\linewidth, height=3.5cm]{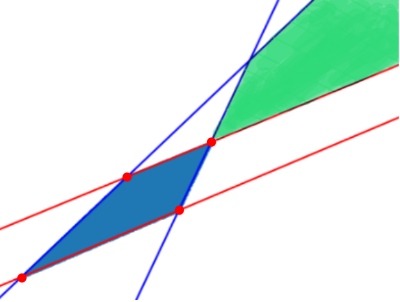}} \hspace{-1.2em}
    \subfloat[\label{fig:vfp_boundary}]{\includegraphics[width=0.53\linewidth, height=3.5cm]{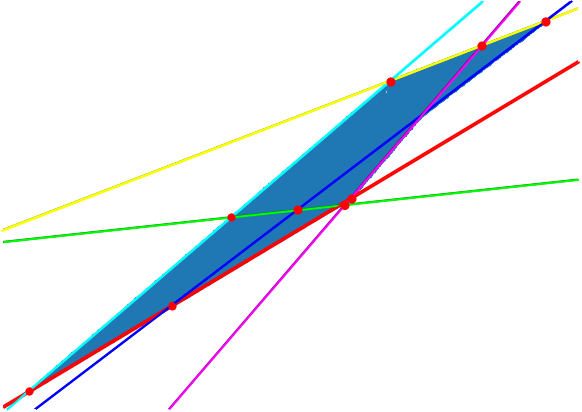}}\\[-2ex]
    \caption{(a): $f_v$ polytope (blue) and MDP-LP polytope (green) of an MDP with $|\states|=2$ and $|\actions|=2$. (b) $f_v$ polytope overlapped with the hyperplane arrangement $H_{MDP}$ from Theorem~\ref{thm: vfp_boundary}. This MDP has 3 actions so $|H_{MDP}|=6$.}
\end{figure}

Figure \ref{fig:vfp_lp_boundary} is an example of the value function polytope in blue, MDP-LP polytope in green and its bounding hyperplanes (the arrangement $H_{MDP}$) as blue and red lines. In Figure~\ref{fig:vfp_boundary} we exemplify Theorem \ref{thm: vfp_boundary} by presenting a value function polytope with delimited boundaries where  $H_{MDP}$ hyperplanes are indicated in different colors. The deterministic policies are those for which $\pi(a|s) \in \{0, 1\} \, \forall a\in \actions, s\in \states$.  In both pictures, the values of deterministic policies in the value space are shown as red dots. The boundaries of the value polytope are indeed included in the set of cells of the arrangement $H_{MDP}$  as stated by Theorem~\ref{thm: vfp_boundary}.
These figures of value function polytopes (blue regions) were obtained by randomly sampling policies and plotting their corresponding state values.   

% \begin{figure}[ht]
%     \centering
%     \includegraphics[width=\linewidth]{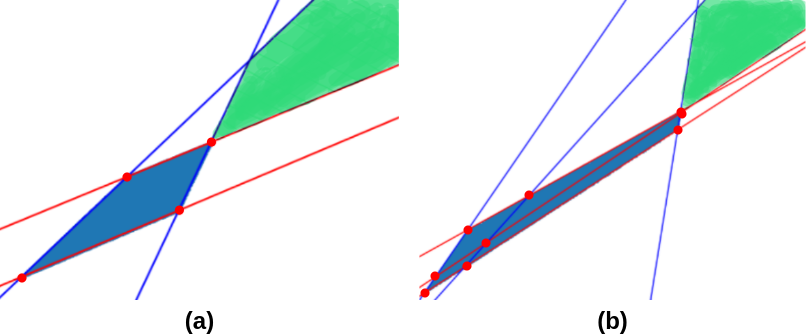}
%     \caption{Value function polytopes~(blue regions) and the corresponding LP feasible region~(green region). Deterministic policies are shown as red dots. Both MDPs have 2 states and 2 actions, 3 actions for each state in (a) and (b), respectively.}
%     \label{fig:vfp_lp}
% \end{figure}

%  \begin{figure}[ht]
%      \centering
%     \includegraphics[width=0.6\linewidth]{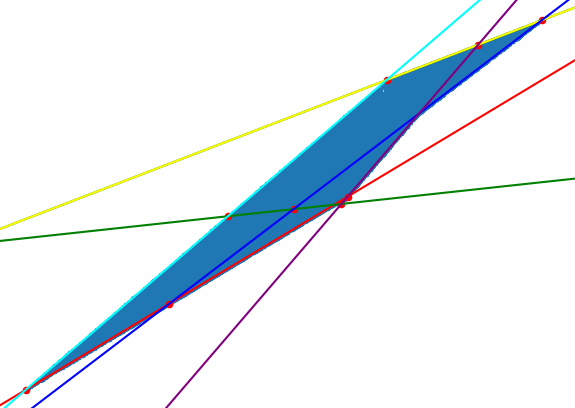}
%      \caption{Value function polytope overlapped with the hyperplane arrangement $H_{MDP}$ from Theorem~\ref{thm: vfp_boundary}. }
%      \label{fig:vfp_boundary}
%  \end{figure}

Some remarks are in order. Note how sometimes the several adjacent cells of the MDP arrangement together form 
a connected cell of the value function polytope. We also observe that for any set of states $s_1,.., s_k \in \states$ and a policy $\pi$, $V^{\pi}$ can be expressed as a convex combination of value functions of $\{s_1,.., s_k\}$-deterministic policies. In particular,  $\valuespace$ is included in the convex hull of the value functions of deterministic policies. It is also demonstrated clearly in Figure~\ref{fig:vfp_boundary} that the value functions of deterministic policies are not always vertices and the vertices of the value polytope are not always value functions of deterministic policies, but they are always intersections of hyperplanes on $H_{MDP}$. 
%Figure~\ref{fig:vfp_lp} (b) shows that deterministic policies are not necessarily mapped to vertices of the value function polytope. 
However, optimal values will always include a deterministic vertex. This observation suggests that it would suffice to find the optimal policy by only visiting deterministic policies on the boundary. It is worthwhile to note that the optimal value of our MDP would be at the unique intersection vertex of the two polytopes. We note that the blue regions in Figure~\ref{fig:vfp_lp_boundary} are \emph{not} related to the polytope of the dual formulation of LP. Unlike the MDP polytope which can be characterized as the intersection of finitely many half-spaces, we do not have such a neat representation for the value function polytope. The pictures presented here and many more experiments we have done suggest the following stronger result is true:

{\bf Conjecture:} if the value polytope intersects a cell of the arrangement $H_{MDP}$, then it contains the entire cell, thus all full-dimensional cells of the value function polytope are equal to the union of full-dimensional cells of the arrangement.

Proving this conjecture requires showing that the map from policies to value functions is surjective over the cells it touches. At the moment we can only guarantee that there are no isolated components because the value polytope is a compact set. More strongly \citet{Dadashi2019value} shown (using the line theorem) that there is  path connectivity from $V^{\pi}$, in any cell, to others is guaranteed by a polygonal path. More precisely if we let $V^{\pi}$ and $V^{\pi'}$ be two value functions. Then there exists a sequence of $k \le |\states|$ policies, $\pi_1, \dots, \pi_k$, such that $V^\pi = V^{\pi_1}$, $V^{\pi'} = V^{\pi_k}$, and for every $i \in 1, \dots, k - 1$, the set $\{ f_v( \alpha \pi_i + (1 - \alpha) \pi_{i+1}) \cbar \alpha \in [0, 1] \}$ forms a line segment.

%From the line theorem, there are two $s\text{-deterministic}$ policies in $\agreeone$, denoted $\pi_l, \pi_u$, which
%bracket the value of all other policies $\pi' \in \agreeone$: 
%\begin{equation*}
%    f_v(\pi_l) \preccurlyeq f_v(\pi') \preccurlyeq f_v(\pi_u).
%\end{equation*}

It was observed that algorithms for solving MDPs have different learning behavior when visualized in the value polytope space. 
For example, policy gradient methods~\cite{sutton2000policy, kakade2002natural, policygradient_actorcritic, policygradientWilliam, policygradientWilliamsPeng91} have an improvement path inside of the value function polytope; value iteration can go outside of the polytope which means there can be no corresponding policy during the update process; and policy iteration navigates exactly through deterministic policies. 
In the rest of our paper we use this geometric intuition to design a new algorithm.

%%%%%%%%%%%%%%%%%%%%%%%%%%%%%%%%%%%%%%%%%%%%%%%%%
 
\section{The Method of Geometric Policy Iteration}
\label{sec:gpi}

\begin{figure}[h]
    \centering
    \captionsetup[subfloat]{farskip=0pt,captionskip=-1pt}
    \subfloat[]{\includegraphics[width=0.51\linewidth]{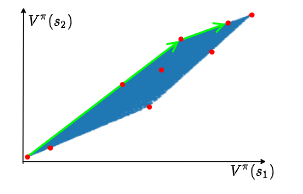}} \hspace{-1.2em}
    \subfloat[]{\includegraphics[width=0.51\linewidth]{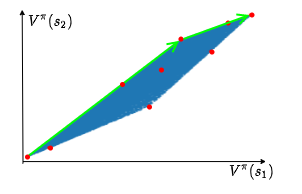}}\\[-2ex]
    \caption{The value sequences of one iteration which involves a sweep over all states looking for policy updates. (a): In PI, we may not reach the end of a line segment for an action switch. (b): An endpoint is always reached in GPI.}
    \label{fig:gpiv_endpoints}
\end{figure}

We now present \emph{geometric policy iteration} (GPI) that improves over PI based on the geometric properties of the learning dynamics. Define an \textit{action switch} to be an update of policy $\pi$ in any state $s\in \states$. The Line theorem shows that policies agreeing on all but one state lie on a line segment. So an action switch is a move along a line segment to improve the value function. In PI, we use the optimality Bellman operator $\bellop^* V(s) = \max_{\pi}( r^{\pi} + \gamma P^{\pi} V)(s)$ to decide the action to switch to for state $s$. However, $\bellop^* V (s)$ does not guarantee the largest value improvement $V^*(s) - V(s)$ for $s$. This phenomenon is illustrated in Figure~\ref{fig:gpiv_endpoints} where we plot the value sequences of PI and the proposed GPI. 

% Algebraically, for any two policies $\pi'$ and $\pi$, the difference between these two methods can be characterized as follows.
% \begin{equation}
%     \Vpiprime - \Vpi = \left(I - \gamma \Ppiprime\right)^{-1} \left( \bellop^{\pi'}\Vpi - \Vpi \right).
% \end{equation}
% The proof can be found in Section~\ref{sec:gpi_theory}.

We propose an alternative action-switch strategy in GPI that directly calculates the improvement of the value function for one state. By choosing the action with the largest value improvement, we can always reach the endpoint of a line segment which potentially reduces the number of action switches.

This strategy requires efficient computation of the value function because a naive calculation of the value function by Eq.~\eqref{eq:bellman_eq} is very expensive due to the matrix inversion. On the other hand, PI only re-evaluates the value function once per iteration. Our next theorem states that the new state-value can be efficiently computed. This is achieved by using the fact that the policy improvement step can be done state-by-state within a sweep over the state set, so adjacent policies in the update sequence only differ in one state.

% Our main contribution here is to use the Sherman-Morrison formula to efficiently calculate the best action $a_i$ for state $s_i$ and then update the value function for all states. 

\begin{theorem}
\label{thm:sherman_morrison}
 Given $\bQ^\pi = (I - \gamma \Ppi)^{-1}$ and $V^\pi = \bQ^\pi r^\pi$. If a new policy $\delta$ only differs from $\pi$ in state $s$ with $\delta(s) = a \ne \pi(s)$, $\Vdelta(s)$ can be calculated efficiently by
    \begin{equation}
        \Vdelta(s) = \parentheses{ \bone_s + \frac{\bQ^\pi(s, s)}{1 - \bw^\top_a \,\bq_s} \, \bw_a}^{\top} \parentheses{\Vpi + \Delta r_a \, \bq_s },
        \label{eq:sherman_morrison_thm}
    \end{equation}
where $\bw_a = \gamma \left(\transitions(s, a) - \transitions(s, \pi(s))\right)$ is a $|\states|$-d vector, $\Delta r_a = \rewards(s, a) - \rewards(s, \pi(s))$ is a scalar, $\bq_s$ is the $s^{\text{th}}$ column of $\bQ^\pi$, and $\bone_s$ is a vector with entry $s$ being $1$, others being $0$.

%  where $\bq_s$ is the $s$-th column of $\bQ^\pi$,
%  \begin{align*}
%      \bw_a &= \gamma \left(\transitions(s, \delta(s), :) - \transitions(s, \pi(s), :)\right),\\
%      \Delta r_a &= \rewards(s, \delta(s)) - \rewards(s, \pi(s)),
%  \end{align*}
% and $\bone_s$ is a vector with entry $s$ being $1$, others being $0$.
\end{theorem}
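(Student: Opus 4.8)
The plan is to observe that switching the action only in state $s$ perturbs the evaluated policy along a single row of the transition matrix and a single entry of the reward vector, so that $I - \gamma P^\delta$ is a \emph{rank-one} modification of $I - \gamma \Ppi$. This is exactly the setting of the Sherman--Morrison formula, which accounts both for the name of the theorem and for the denominator $1 - \bw_a^\top \bq_s$ appearing in Eq.~\eqref{eq:sherman_morrison_thm}.

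First I would write the perturbations explicitly. Because $\delta$ agrees with $\pi$ off state $s$, the reward vector changes only in its $s$-th entry, $\rdelta = \rpi + \Delta r_a\, \bone_s$, and the transition matrix changes only in its $s$-th row, so that $\gamma P^\delta = \gamma \Ppi + \bone_s \bw_a^\top$ with $\bw_a = \gamma\big(\transitions(s,a) - \transitions(s,\pi(s))\big)$. Hence
\[
    I - \gamma P^\delta = (I - \gamma \Ppi) - \bone_s\, \bw_a^\top,
\]
a rank-one update of the matrix whose inverse is $\bQ^\pi$.

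Next I would apply Sherman--Morrison with $A = I - \gamma \Ppi$, $u = \bone_s$, and $v = \bw_a$, and use $\bQ^\pi \bone_s = \bq_s$ to obtain
\[
    (I - \gamma P^\delta)^{-1} = \bQ^\pi + \frac{\bq_s\, \bw_a^\top \bQ^\pi}{1 - \bw_a^\top \bq_s}.
\]
Multiplying on the right by $\rdelta$ and simplifying with $\bQ^\pi \rdelta = \Vpi + \Delta r_a\, \bq_s$ yields
\[
    \Vdelta = \big(\Vpi + \Delta r_a\, \bq_s\big) + \frac{\bq_s\, \bw_a^\top\big(\Vpi + \Delta r_a\, \bq_s\big)}{1 - \bw_a^\top \bq_s}.
\]
Finally I would extract the $s$-th coordinate by left-multiplying with $\bone_s^\top$; using $\bone_s^\top \bq_s = \bQ^\pi(s,s)$ and factoring out $\Vpi + \Delta r_a\, \bq_s$ collapses the two terms into the single product of Eq.~\eqref{eq:sherman_morrison_thm}.

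The computation is routine once the rank-one structure is spotted, so the only conceptual step is that identification; the rest is bookkeeping. The one point genuinely worth checking is that the denominator $1 - \bw_a^\top \bq_s$ is nonzero, so that the formula is well defined. This holds automatically: $I - \gamma P^\delta$ is invertible for every policy $\delta$ because its inverse is the convergent Neumann series $\sum_{k \ge 0} \gamma^k (P^\delta)^k$ (here $\gamma < 1$ and $P^\delta$ is row-stochastic), and the Sherman--Morrison identity then forces $1 - \bw_a^\top \bq_s \ne 0$.
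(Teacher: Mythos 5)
Your proposal is correct and follows essentially the same route as the paper's proof: both identify $I - \gamma P^\delta$ as the rank-one update $(I - \gamma \Ppi) - \bone_s \bw_a^\top$, apply Sherman--Morrison with $\bQ^\pi \bone_s = \bq_s$ and $\bQ^\pi \rpi = \Vpi$, and extract the $s$-th coordinate. Your closing remark that invertibility of $I - \gamma P^\delta$ (via the Neumann series) forces $1 - \bw_a^\top \bq_s \ne 0$ is a nice touch that the paper leaves implicit.
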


\begin{proof}
We here provide a general proof that we can calculate $V^{\delta}$ given policy $\pi$, $\Vpi$, and $\delta$ differs from $\pi$ in only one state.
\begin{align*}
    \Vdelta = \left( I - \gamma \Pdelta \right)^{-1} \rdelta = \parentheses{ I - \gamma \Ppi - \gamma \Delta P}^{-1} \rdelta,
\end{align*}
where $\Delta P = \Pdelta - \Ppi$. Assume $\delta$ and $\pi$ differ in state $s$. $\Delta P$ is a rank-$1$ matrix with row $j$ being $\transitions(s, \pi(s)) - \transitions(s, a)$, and all other rows being zero vectors.

We can then express $\Delta P$ as the outer product of two vectors $\Delta P = \bone_{s} \bw_a^\top$, where $\bone_{s}$ is a one-hot vector
\begin{align}
    \bone_{s}(i) = 
    \begin{cases}
        1, & \text{if $i = s$,} \\
        0, & \text{otherwise,}
    \end{cases}
\end{align}
and $\bw_a$ is defined above.
% \begin{equation}
%     \bw_a = \gamma \left(\transitions(s, a) - \transitions(s, \pi(s))\right).
% \end{equation}

Similarly, we have $\rdelta = \rpi + \Delta r = \rpi + \Delta r_a \bone_s$.
Then, we have
\begin{align*}
        \Vdelta & = \parentheses{ I - \gamma \Pdelta}^{-1} \rdelta \\ 
        & = \parentheses{ I - \gamma \Ppi - \gamma \Delta P}^{-1} \parentheses{\rpi + \Delta r} \\
        & = \parentheses{ I - \gamma \Ppi - \bone_{s} \bw_a^\top}^{-1} \parentheses{\rpi + \Delta r_a \, \bone_s}  \\
        & = \parentheses{ \bQ^\pi + \frac{\bQ^\pi \bone_s \bw_a^\top \bQ^\pi}{1 - \bw_a^\top \bQ^\pi \bone_s} } \parentheses{\rpi + \Delta r_a \, \bone_s} \tag*{$\parentheses{\text{Sherman-Morrison, }\bQ^\pi = \parentheses{I - \gamma \Ppi}^{-1}}$} \\
        % & = \parentheses{ I + \frac{\bQ \bone_s \bw^\top }{1 - \bw^\top \bQ \bone_s} } \parentheses{\bQ \rpi + \Delta r(s) \, \bQ \, \bone_s} \\
        & = \parentheses{ I + \frac{\bq_s \bw_a^\top }{1 - \bw_a^\top \bq_s} } \parentheses{\Vpi + \Delta r_a \, \bq_s} \tag*{$\parentheses{\text{here, }\bQ^\pi \bone_s = \bq_s ,\,\bQ^\pi \rpi = \Vpi}$}.
    \end{align*}
Thus, for state $s$, we have 
\begin{equation*}
    \Vdelta(s) = \parentheses{\bone_s + \frac{\bQ^\pi(s, s)}{1 - \bw_a^\top \bq_s} \bw_a} \parentheses{\Vpi + \Delta r_a \, \bq_s},
\end{equation*}
which completes the proof.
\end{proof}

Theorem~\ref{thm:sherman_morrison} suggests that updating the value of a single state using Eq.~\eqref{eq:sherman_morrison_thm} takes $\bigO{|\states||\actions|}$ arithmetic operations which matches the complexity of the optimality Bellman operator used in policy iteration.

\begin{figure}[h]
    \centering
    \captionsetup[subfloat]{farskip=0pt,captionskip=-1pt}
    \subfloat[]{\includegraphics[width=0.51\linewidth]{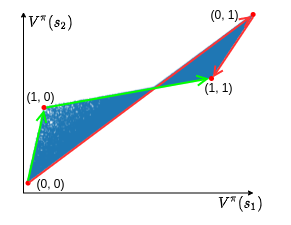}} \hspace{-1.2em}
    \subfloat[]{\includegraphics[width=0.51\linewidth]{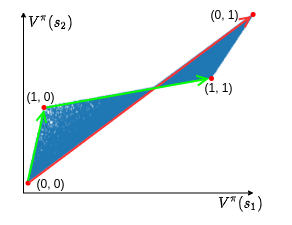}}\\[-2ex]
    \caption{Two paths are shown for each PI, GPI. The green and red paths denote one iteration with $\pi(s_1)$ and $\pi(s_2)$ updated first, respectively. (a): The policy improvement path of PI. The red path is not action-switch-monotone which will lead to an additional iteration. (b): GPI is always action-switch-monotone. The red path achieves the optimal values in one action switch.}
    \label{fig:gpiv_monotone}
\end{figure}

The second improvement over policy iteration comes from the fact that the value improvement path in $\valuespace$ may not be monotonic with respect to action switches. Although it is well-known that the update sequence $\{\Vpik\}$ is non-decreasing in the iteration number $k$, the value function could decrease in the policy improvement step of the policy iteration. An illustration is shown as the red path of PI in Figure~\ref{fig:gpiv_monotone}. The possible value decrease is because when the Bellman operator $\bellop$ is used to decide an action switch, $V$ is fixed for the entire sweep of states. This leads us to the motivation for GPI which is to update the value function after each action switch such that the value function action-switch-monotone. This idea can be seamlessly combined with Theorem~\ref{thm:sherman_morrison} since the values of all states can be updated efficiently in $\bigO{|\states|^2|\actions|}$ arithmetic operations. Thus, the complexity of completing one iteration is the same as policy iteration. 
\begin{algorithm}
\caption{Geometric Policy Iteration}
\begin{algorithmic}[1]
    \Input $\transitions$, $\rewards$, $\gamma$
    \State set iteration number $k = 0$ and randomly initialize $\piki{k}{0}$
    \State Calculate $\bQki{k}{0} = (I - \gamma \Pki{k}{0})^{-1}$ and $\Vki{k}{0} = \bQki{k}{0} \rki{k}{0}$ \label{alg:gpi_line_initial_eval}
    % \item[\textbf{3.}] For all $s \in \states$
    \For{$i=1, \ldots, |\states|$}  \label{alg:gpi_line_for}
        \State calculate the best action $a_i$ according to Eq.~\eqref{eq:gpiv_action_selection}
        \label{alg:gpi_line_action_select}
        \State update $\bQki{k}{i}$ according to Eq.~\eqref{eq:gpi_Q_update} \label{alg:gpi_line_Q_update}
        \State $\piki{k}{i}(i) = a_i$ \label{alg:gpi_line_pi_update}
        \State $\Vki{k}{i} = \bQki{k}{i} \, \rki{k}{i}$ \label{alg:gpi_line_V_update}
    \EndFor
    \If{$\Vki{k}{|\states|}$ is optimal}
    % \State \textbf{return} $\piki{k}{i}$
    \Return $\piki{k}{|\states|}$
    \EndIf
    \State $\bQki{k+1}{0} = \bQki{k}{|\states|}$, $\piki{k+1}{0} = \piki{k}{|\states|}$, $\Vki{k+1}{0} = \Vki{k}{|\states|}$, $k = k + 1$. Go to step~\ref{alg:gpi_line_for}
\end{algorithmic}
\label{alg:gpi}
\end{algorithm}

We summarize GPI in Algorithm~\ref{alg:gpi}. GPI looks for action switches for all states in one iteration, and updates the value function after each action switch. Let superscript $k$ denote the iteration index, subscript $i$ denote the state index in one iteration. To avoid clutter, we use $i$ to denote the state $s_i$ being updated and drop superscript $\pi$ in $\Ppi$ and $\rpi$.  Step~\ref{alg:gpi_line_initial_eval} evaluates the initial policy $\piki{k}{0}$. The difference here is that we store the intermediate matrix $\bQki{k}{0}$ for later computation. From step~\ref{alg:gpi_line_for} to step~\ref{alg:gpi_line_V_update}, we iterate over all states to search for potential updates. In step~\ref{alg:gpi_line_action_select}, GPI selects the best action by computing the new state-value of each potential action switch by Eq.~\eqref{eq:gpiv_action_selection}.
\begin{equation}
    a_i \in \argmax_{a \in \actions}\left\{\parentheses{ \bone_i + \frac{\bQki{k}{i-1}(i, i)}{1 - \bw^\top_a \,\bqi} \, \bw_a}^{\top} \parentheses{\Vki{k}{i-1} + \Delta r_a \, \bqi } \right\},
    \label{eq:gpiv_action_selection}
\end{equation}
where
\begin{align}
    \bw_a &= \gamma \left(\transitions(i, a) - \transitions(i, \piki{k}{i-1}(i))\right),\label{eq:gpiv_w} \\ 
    \Delta r_a &= \parentheses{\rewards(i, a) - \rewards(i, \piki{k}{i-1}(i))},\label{eq:gpiv_r}
\end{align}
and $\bone_i$ is a vector with $i^{\text{th}}$ entry being $1$ and others being $0$.

Define $\bqi$ to be the $i^{\text{th}}$ column of $\bQki{k}{i-1}$. $\bwiT$ is obtained by Eq.~\eqref{eq:gpiv_w} using the selected action $a_i$. In step~\ref{alg:gpi_line_Q_update}, we update $\bQki{k}{i}$ as follows.
\begin{equation}
    \bQki{k}{i} = \bQki{k}{i-1} + \frac{\bqi \, \bwiT \bQki{k}{i-1}}{1 - \bwiT \bqi}. \label{eq:gpi_Q_update}
\end{equation}
The policy is updated in step~\ref{alg:gpi_line_pi_update} and the value vector is updated in step~\ref{alg:gpi_line_V_update} where $\rki{k}{i}$ is the reward vector under the new policy. The algorithm is terminated when the optimal values are achieved.

\subsection{Theoretical Guarantees}
\label{sec:gpi_theory}

Before we present any properties of GPI, let us first prove the following very useful lemma.

\begin{lemma}
Given two policies $\pi$ and $\pi'$, we have the following equalities.
\begin{align}
    \Vpiprime - \Vpi &= \left(I - \gamma \Ppiprime\right)^{-1} \left( \rpiprime + \gamma \Ppiprime \Vpi - \Vpi \right)  \label{eq:lemma_adv_v},\\
    \Vpiprime - \Vpi &= \left(I - \gamma \Ppi \right)^{-1} \left( \Vpiprime - \rpi - \gamma \Ppi \Vpiprime \right) \label{eq:lemma_v_adv}.
\end{align}
\label{lemma:v_adv}
\end{lemma}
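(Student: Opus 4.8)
The plan is to reduce both identities to the defining Bellman relation of Eq.~\eqref{eq:bellman_eq}, rewritten as $(I - \gamma \Ppi)\Vpi = \rpi$ and, for the second policy, $(I - \gamma \Ppiprime)\Vpiprime = \rpiprime$. Since $\gamma \in [0,1)$ and both $\Ppi$ and $\Ppiprime$ are row-stochastic, the matrices $I - \gamma \Ppi$ and $I - \gamma \Ppiprime$ are invertible; this is exactly what lets us pass back and forth between an identity stated with an inverse and its ``cleared'' form. So rather than manipulate the inverse directly, I would clear it and verify the resulting matrix equation.

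For Eq.~\eqref{eq:lemma_adv_v}, I would multiply both sides on the left by $I - \gamma \Ppiprime$, which turns the claim into $(I - \gamma \Ppiprime)(\Vpiprime - \Vpi) = \rpiprime + \gamma \Ppiprime \Vpi - \Vpi$. Expanding the left side gives $(I - \gamma \Ppiprime)\Vpiprime - (I - \gamma \Ppiprime)\Vpi$, and substituting $(I - \gamma \Ppiprime)\Vpiprime = \rpiprime$ into the first term leaves $\rpiprime - \Vpi + \gamma \Ppiprime \Vpi$, which is precisely the right-hand side. For Eq.~\eqref{eq:lemma_v_adv} I would proceed symmetrically: multiply on the left by $I - \gamma \Ppi$, reducing the claim to $(I - \gamma \Ppi)(\Vpiprime - \Vpi) = \Vpiprime - \rpi - \gamma \Ppi \Vpiprime$, then use $(I - \gamma \Ppi)\Vpi = \rpi$ to cancel the $\Vpi$ contribution, and rearrange the remaining terms into the stated form.

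There is no substantive obstacle here: each equality is a one-line consequence of the Bellman equation once the inverse is cleared, and the two statements are mirror images of each other obtained by interchanging the roles of $\pi$ and $\pi'$ (with $\Vpi$ and $\Vpiprime$ swapped accordingly). The only point I would flag explicitly is the invertibility of $I - \gamma \Ppi$ and $I - \gamma \Ppiprime$, since that is the hypothesis justifying the multiply-and-cancel step in both directions; everything else is routine linear algebra.
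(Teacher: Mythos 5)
Your proposal is correct and is essentially the paper's own argument run in reverse: the paper subtracts the two Bellman equations $\Vpiprime = \rpiprime + \gamma \Ppiprime \Vpiprime$ and $\Vpi = \rpi + \gamma \Ppi \Vpi$ and rearranges to isolate $(I - \gamma \Ppiprime)(\Vpiprime - \Vpi)$ or $(I - \gamma \Ppi)(\Vpiprime - \Vpi)$, whereas you clear the inverse first and then substitute the same Bellman relations --- identical algebra in the opposite direction, and your explicit remark on the invertibility of $I - \gamma \Ppi$ (which the paper leaves implicit) is a sound addition.
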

\begin{proof}
Using Bellman equation, we have
\begin{equation}
    \Vpiprime - \Vpi = \rpiprime + \gamma \Ppiprime \Vpiprime - \rpi - \gamma \Ppi \Vpi. \label{eq:lemme_1_expand}
\end{equation}
Eq.~\eqref{eq:lemme_1_expand} can be rearranged as
\begin{equation*}
    \Vpiprime - \Vpi = \rpiprime - \rpi + \gamma \Ppiprime \left( \Vpiprime - \Vpi \right) + \gamma \left(\Ppiprime - \Ppi \right) \Vpi,
    \label{eq:lemma_adv_v_proof}
\end{equation*}
and Eq.~\eqref{eq:lemma_adv_v} follows.

To get Eq.~\eqref{eq:lemma_v_adv}, we rearrange Eq.~\eqref{eq:lemme_1_expand} as
\begin{equation*}
    \Vpiprime - \Vpi = \rpiprime - \rpi + \gamma \Ppi \left( \Vpiprime - \Vpi \right) + \gamma \left(\Ppiprime - \Ppi \right) \Vpiprime, \label{eq:lemma_v_adv_proof}
\end{equation*}
and Eq.~\eqref{eq:lemma_v_adv} follows.
\end{proof}

Our first result is an immediate consequence of re-evaluating the value function after an action switch.
\begin{proposition}
The value function is non-decreasing with respect to action switches in GPI, i.e., $\Vki{k}{i+1} \ge \Vki{k}{i}$.
\label{proposition:action_switch_monotone}
\end{proposition}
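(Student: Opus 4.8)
The plan is to isolate a single action switch, from $\pi := \piki{k}{i}$ to $\pi' := \piki{k}{i+1}$, and show the full value vector cannot drop. By construction these two policies agree on every state except $i+1$, and $\Vki{k}{i} = \Vpi$, $\Vki{k}{i+1} = \Vpiprime$ are the \emph{exact} value functions, precisely because GPI re-evaluates after each switch. The first fact I would record is that the action selection in Eq.~\eqref{eq:gpiv_action_selection} maximizes, over candidate actions $a$, the true new value at the updated state (this is exactly the quantity computed in Theorem~\ref{thm:sherman_morrison}). Since retaining the old action $\piki{k}{i}(i+1)$ is one of the candidates and yields $\Vpi(i+1)$, the maximizer satisfies
\begin{equation*}
\Vpiprime(i+1) \ge \Vpi(i+1).
\end{equation*}

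Next I would invoke Lemma~\ref{lemma:v_adv}, specifically Eq.~\eqref{eq:lemma_adv_v}, to write
\begin{equation*}
\Vpiprime - \Vpi = \left(I - \gamma \Ppiprime\right)^{-1} d, \qquad d := \rpiprime + \gamma \Ppiprime \Vpi - \Vpi .
\end{equation*}
The key structural observation is that $d$ is supported only on the updated coordinate. Indeed, for any state $s \ne i+1$ the policies $\pi$ and $\pi'$ coincide, so $\rpiprime(s) = \rpi(s)$ and $\Ppiprime(\cdot \cbar s) = \Ppi(\cdot \cbar s)$; combined with the Bellman equation $\Vpi = \rpi + \gamma \Ppi \Vpi$, this forces $d(s) = 0$. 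Hence $d = d(i+1)\, \bone_{i+1}$.

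The final step is to propagate nonnegativity from coordinate $i+1$ to the entire vector. Writing $M := \left(I - \gamma \Ppiprime\right)^{-1} = \sum_{t \ge 0} (\gamma \Ppiprime)^t$, which is entrywise nonnegative with diagonal entries at least $1$ because $\Ppiprime$ is stochastic and $\gamma \in [0,1)$, we obtain $\Vpiprime - \Vpi = d(i+1)\, M(\cdot, i+1)$. Reading off coordinate $i+1$ gives $\Vpiprime(i+1) - \Vpi(i+1) = d(i+1)\, M(i+1, i+1)$ with $M(i+1,i+1) \ge 1 > 0$; together with the inequality from the first paragraph this yields $d(i+1) \ge 0$. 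Since the column $M(\cdot, i+1)$ is nonnegative, we conclude $\Vpiprime - \Vpi = d(i+1)\, M(\cdot, i+1) \ge 0$, i.e. $\Vki{k}{i+1} \ge \Vki{k}{i}$.

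I expect the main obstacle to be this last step: converting a scalar improvement at the single switched state into a componentwise inequality for the whole vector. It rests on two facts that must be stated carefully — that the perturbation $d$ has support confined to the switched coordinate (Bellman equation plus agreement off $i+1$), and that the resolvent $\left(I - \gamma \Ppiprime\right)^{-1}$ is entrywise nonnegative with strictly positive diagonal (the Neumann series). Everything else reduces to routine bookkeeping with the definitions of $\Ppi$ and $\rpi$.
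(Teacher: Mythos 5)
Your proof is correct, and while it shares the paper's starting point --- Lemma~\ref{lemma:v_adv}, Eq.~\eqref{eq:lemma_adv_v}, together with the observation that the maximization in Eq.~\eqref{eq:gpiv_action_selection} includes the incumbent action, so the true value at the switched state cannot drop --- it diverges at the key step, and in a way that matters. The paper propagates the improvement to the unswitched states via the blanket inequality $\Vpiprime \ge \bellop^{\pi'} \Vpi$ (its Eq.~\eqref{eq:v_ge_tv}, asserted ``for any $\pi'$, $\pi$''), combined with the fixed-point identity $\bellop^{\pi'}\Vpi(j) = \bellop^{\pi}\Vpi(j) = \Vpi(j)$ at states $j$ where the two policies agree. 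But Eq.~\eqref{eq:v_ge_tv} is not valid for arbitrary pairs of policies: since $\Vpiprime - \bellop^{\pi'}\Vpi = \gamma \Ppiprime\left(\Vpiprime - \Vpi\right)$, it is equivalent to the nonnegativity one is trying to establish, and the paper's justification (``since $\Ppi \ge 0$'') silently assumes $\Vpiprime \ge \Vpi$; a one-state MDP with a rewarding and a non-rewarding self-loop action already gives a counterexample to the general claim. Your route avoids this circularity: you localize the defect $d = \rpiprime + \gamma \Ppiprime \Vpi - \Vpi$ to the switched coordinate (policy agreement off state $i+1$ plus the Bellman equation), expand the resolvent as a Neumann series to get entrywise nonnegativity with diagonal entries at least $1$, and then read off the sign of $d(i+1)$ from the already-established scalar improvement at the switched state. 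In effect your argument is a rigorous repair of the paper's: it proves exactly the instance of Eq.~\eqref{eq:v_ge_tv} that is actually needed --- the one where $d \ge 0$ because $d$ is supported on the improved coordinate --- at the modest cost of explicit bookkeeping with $M = \left(I - \gamma \Ppiprime\right)^{-1}$. One cosmetic point if you write this up: the formula in Eq.~\eqref{eq:gpiv_action_selection} returns the \emph{true} new value (via Theorem~\ref{thm:sherman_morrison}) only because $I - \gamma \Pdelta$ is invertible for $\gamma \in [0,1)$, which guarantees the Sherman--Morrison denominator $1 - \bw_a^\top \bqi$ is nonzero; that deserves a half-sentence.
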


\begin{proof}
From Eq.~\eqref{eq:lemma_adv_v} in Lemma~\ref{lemma:v_adv}, we have 
% \begin{align}
%     \Vpiprime - \Vpi &= \left(I - \gamma \Ppiprime\right)^{-1} \left( \rpiprime + \gamma \Ppiprime \Vpi - \Vpi \right)  \label{eq:lemma_adv_v},\\
%     \Vpiprime - \Vpi &= \left(I - \gamma \Ppi \right)^{-1} \left( \Vpiprime - \rpi - \gamma \Ppi \Vpiprime \right) \label{eq:lemma_v_adv}.
% \end{align}
\begin{equation*}
    \left(I - \gamma \Ppiprime\right)\left(\Vpiprime - \Vpi\right) = \rpiprime + \gamma \Ppiprime \Vpi - \Vpi.
\end{equation*}
Since $\Ppi \ge 0$, we have 
\begin{equation*}
    \Vpiprime - \Vpi \ge \rpiprime + \gamma \Ppiprime \Vpi - \Vpi = \bellop^{\pi'} \Vpi - \Vpi,
\end{equation*}
which implies that for any $\pi'$, $\pi$,
\begin{equation}
    \Vpiprime \ge \bellop^{\pi'} \Vpi. \label{eq:v_ge_tv}
\end{equation}
Now, consider $\piki{k}{i+1}$ and $\piki{k}{i}$. According to the updating rule of GPI, for state $i$ we have $\Vki{k}{i+1}(i) \ge \Vki{k}{i}(i)$. For state $j \ne i$, we have
\begin{equation*}
    \Vki{k}{i+1}(j) \ge \Tki{k}{i+1}\Vki{k}{i}(j) = \Tki{k}{i}\Vki{k}{i}(j) = \Vki{k}{i}(j).
\end{equation*}
Combined, we have $\Vki{k}{i+1} \ge \Vki{k}{i}$, which completes the proof.
\end{proof}

We next turn to the complexity of GPI and bound the number of iterations required 
to find the optimal solution. The analysis depends on the lemma described as follows.

\begin{lemma}
Let $\Vstar$ denote the optimal value. At iteration $k$ of GPI, we have the following inequality.
    \begin{equation*}
        \left(\Vstar - \Vki{k}{i}\right)(i) \le \gamma \Pstar \left(\Vstar - \Vki{k-1}{i} \right)(i).
    \end{equation*}
    \label{lemma:v_diff_opt_k}
\end{lemma}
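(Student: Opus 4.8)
The plan is to reduce the stated inequality to a single greedy-improvement step at state $i$, and then bridge the resulting index $(k,i-1)$ back to $(k-1,i)$ using the global monotonicity of the value sequence. Throughout I take $\pi^*$ to be a deterministic optimal policy (which always exists for finite discounted MDPs), so that $\Vstar(i) = \rewards(i,\pi^*(i)) + \gamma\,\transitions(i,\pi^*(i))^\top \Vstar$ is simply the $i$-th coordinate of the Bellman equation for $\pi^*$.

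First I would exploit the GPI action-selection rule. When GPI processes state $i$ in iteration $k$, it chooses the action maximizing the \emph{true} new value at $i$ (Eq.~\eqref{eq:gpiv_action_selection}) over all policies differing from $\piki{k}{i-1}$ only at $i$. Since $\pi^*(i)\in\actions$ is one admissible choice, letting $\delta$ be the policy that agrees with $\piki{k}{i-1}$ everywhere except using $\pi^*(i)$ at state $i$, we get $\Vki{k}{i}(i) \ge \Vdelta(i)$. Applying inequality~\eqref{eq:v_ge_tv} with $\pi'=\delta$ and $\pi=\piki{k}{i-1}$ and reading off coordinate $i$ gives $\Vdelta(i) \ge \rewards(i,\pi^*(i)) + \gamma\,\transitions(i,\pi^*(i))^\top \Vki{k}{i-1}$. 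Subtracting this from the Bellman equation for $\pi^*$ at state $i$ yields
\begin{equation*}
    \left(\Vstar - \Vki{k}{i}\right)(i) \le \gamma\,\transitions(i,\pi^*(i))^\top\left(\Vstar - \Vki{k}{i-1}\right) = \gamma \Pstar\left(\Vstar - \Vki{k}{i-1}\right)(i),
\end{equation*}
since the $i$-th row of $\Pstar$ is exactly $\transitions(i,\pi^*(i))$.

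The last step is to replace $\Vki{k}{i-1}$ by $\Vki{k-1}{i}$. Reading the value vectors across sweep boundaries as the single chain $\dots, \Vki{k-1}{|\states|} = \Vki{k}{0}, \Vki{k}{1}, \dots$, Proposition~\ref{proposition:action_switch_monotone} shows this chain is globally non-decreasing; in particular $\Vki{k}{i-1} \ge \Vki{k-1}{i}$, because $\Vki{k}{i-1}$ appears strictly later. Hence $\Vstar - \Vki{k}{i-1} \le \Vstar - \Vki{k-1}{i}$ componentwise, and applying the nonnegative row vector $\gamma\,\transitions(i,\pi^*(i))^\top$ preserves the inequality, delivering the claim. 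The one point requiring care, and the main obstacle, is precisely this index bridging: the greedy step naturally produces the index $(k,i-1)$, so one must invoke the \emph{cross-iteration} monotonicity (not merely the within-sweep version) to recover the $(k-1,i)$ term of the statement, together with nonnegativity of $\gamma\Pstar$ to push the componentwise bound through the expectation.
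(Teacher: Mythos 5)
Your proof is correct, and it takes a noticeably different route from the paper's. The paper telescopes through the optimality Bellman operator: it writes $\Vstar - \Vki{k}{i} = \bigl(\Tstar \Vstar - \Tstar \Vki{k-1}{i}\bigr) + \bigl(\Tstar \Vki{k-1}{i} - \Vki{k}{i}\bigr)$, identifies the first difference at coordinate $i$ with $\gamma \Pstar \bigl(\Vstar - \Vki{k-1}{i}\bigr)(i)$, and kills the second term by taking $\pi' \in \argmax_\pi \bellop^\pi \Vki{k-1}{i}$ and invoking the GPI update rule together with Eq.~\eqref{eq:v_ge_tv} to get $\Vki{k}{i} \ge \bellop^{\pi'} \Vki{k-1}{i}$; it thus compares $\Vki{k}{i}$ directly against a backup of the \emph{previous-iteration} value $\Vki{k-1}{i}$ and never invokes Proposition~\ref{proposition:action_switch_monotone}. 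You instead anchor everything at a fixed deterministic optimal policy $\pi^*$ and at the \emph{current} pre-update value: the argmax in Eq.~\eqref{eq:gpiv_action_selection} dominates the specific competitor $\delta$ that plays $\pi^*(i)$ at state $i$, giving $\bigl(\Vstar - \Vki{k}{i}\bigr)(i) \le \gamma \Pstar \bigl(\Vstar - \Vki{k}{i-1}\bigr)(i)$, and then you bridge the index $(k,i-1)$ back to $(k-1,i)$ via the cross-sweep monotone chain (Proposition~\ref{proposition:action_switch_monotone} plus the boundary identification $\Vki{k}{0} = \Vki{k-1}{|\states|}$). What your route buys is precisely the bookkeeping the paper glosses over: GPI's action selection at step $(k,i)$ is relative to $\piki{k}{i-1}$, not to $\piki{k-1}{i}$, so some monotonicity argument is genuinely needed to relate the two, and you supply it explicitly; what the paper's route buys is brevity and independence from the monotonicity proposition.

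One shared caveat, which your organization actually handles better. Both proofs lean on Eq.~\eqref{eq:v_ge_tv}, but as stated for arbitrary $\pi, \pi'$ that inequality is not fully general: since $\Vpiprime - \bellop^{\pi'}\Vpi = \gamma \Ppiprime \bigl(\Vpiprime - \Vpi\bigr)$, it requires $\Ppiprime\bigl(\Vpiprime - \Vpi\bigr) \ge 0$, e.g.\ $\Vpiprime \ge \Vpi$, which need not hold for your $\delta$ (switching to $\pi^*(i)$ at one state can decrease value under a bad continuation policy). In your single-deviation setting the gap closes easily: if the one-step backup improves, i.e.\ $\bellop^{\delta}\Vki{k}{i-1}(i) \ge \Vki{k}{i-1}(i)$, then the line-theorem orientation gives $\Vdelta \ge \Vki{k}{i-1}$ and Eq.~\eqref{eq:v_ge_tv} is legitimately applicable; otherwise the no-switch option in the argmax already yields $\Vki{k}{i}(i) \ge \Vki{k}{i-1}(i) > \bellop^{\delta}\Vki{k}{i-1}(i)$, so the inequality you need holds in either case. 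Adding this two-line case split would make your argument fully self-contained, and in this respect it is easier to make airtight than the paper's version, which applies the same inequality to a greedy policy without an analogous safeguard.
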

\begin{proof}
From Bellman equation, we have
    \begin{align*}
        \Vstar - \Vki{k}{i} & = \Tstar \Vstar - \Vki{k}{i} \\
        & = \Tstar \Vstar - \Tstar \Vki{k-1}{i} + \Tstar \Vki{k-1}{i} - \Vki{k}{i}.
    \end{align*}
    For state $i$, we have
    \begin{align}
        & \left(\Vstar - \Vki{k}{i}\right)(i) \\ 
        & = \gamma \Pstar \left(\Vstar - \Vki{k-1}{i} \right)(i) + \left(\Tstar \Vki{k-1}{i} - \Vki{k}{i}\right)(i) \nonumber \\
        & \le \gamma \Pstar \left(\Vstar - \Vki{k-1}{i} \right)(i) + \left(\max_\pi \bellop^\pi \Vki{k-1}{i} - \Vki{k}{i} \right)(i)\nonumber \\
        & \le \gamma \Pstar \left(\Vstar - \Vki{k-1}{i} \right)(i) + \left(\Vki{k}{i} - \Vki{k}{i}\right)(i) \label{eq:proof_lemma_ineq} \\
        & =  \gamma \Pstar \left(\Vstar - \Vki{k-1}{i} \right)(i).
        \nonumber
    \end{align}
    % \eqref{eq:proof_gpi_update_ineq} is because of the updating rule of GPI,
    % \begin{align*}
    %     \Tstar \Vki{k-1}{i} (i) & = \left(\rstar + \gamma \Pstar \Vki{k-1}{i} \right)(i) \\
    %     & \le \max_{\pi} \left(\rpi + \gamma \Ppi \Vki{k-1}{i} \right)(i) \\
    %     & = \Tki \Vki{k-1}{i} (i).
    % \end{align*}
    Let $\pi' = \argmax_\pi \bellop^\pi \Vki{k-1}{i}$. The inequality \eqref{eq:proof_lemma_ineq} is because of the updating rule of GPI and \eqref{eq:v_ge_tv},
    % \begin{align*}
    %     \left( \bellop^{\pi'} \Vki{k-1}{i} - \Vki{k-1}{i} \right) & = \left(I - \gamma P^{\pi'} \right) \left( \Vki{k}{i} - \Vki{k-1}{i} \right) \\
    %     & \le \Vki{k}{i} - \Vki{k-1}{i},
    % \end{align*}
    \begin{equation*}
        \Vki{k}{i}\ge V^{\pi'}_i \ge \bellop^{\pi'}\Vki{k-1}{i},
    \end{equation*}
    which completes the proof.
\end{proof}

\begin{theorem}
 GPI finds the optimal policy in $\bigO{\frac{|\actions|}{1-\gamma}\log \frac{1}{1-\gamma}}$ iterations.
\end{theorem}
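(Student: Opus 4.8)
The plan is to follow the two-stage strategy of \citet{ye2011} and its refinements by \citet{hansen2013strategy, scherrer2016improved}: first establish a geometric (linear) convergence rate of the value iterates toward $\Vstar$, and then couple this rate with a combinatorial ``crossover'' argument that permanently eliminates suboptimal state-action pairs. Throughout, let $\pi^*$ denote the optimal policy, so that $\Vstar = V^{\pi^*}$ and $\Pstar = P^{\pi^*}$, and write $V^{(k)} := \Vki{k}{|\states|}$ for the value vector at the end of iteration $k$.

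First I would establish the contraction. Combining the per-state estimates of Lemma~\ref{lemma:v_diff_opt_k} over all states yields, componentwise, $0 \le \Vstar - V^{(k)} \le \gamma \Pstar \bigl(\Vstar - V^{(k-1)}\bigr)$, where the nonnegativity on the left comes from Proposition~\ref{proposition:action_switch_monotone} together with Eq.~\eqref{eq:v_ge_tv} (the iterates never overshoot $\Vstar$). Since $\Pstar$ is row-stochastic, $\gamma \Pstar$ has operator $\infty$-norm equal to $\gamma$, so iterating the inequality gives $\norm{\Vstar - V^{(k)}}_\infty \le \gamma^k \norm{\Vstar - V^{(0)}}_\infty$. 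This is the routine part of the argument.

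Second comes the crossover argument. For the current policy $\pi$ with value $\Vpi$ I would work with the advantage $\tilde{A}$ from the preliminaries and the bound $\Vpiprime \ge \bellop^{\pi'}\Vpi$ of Eq.~\eqref{eq:v_ge_tv}. The key sub-lemma (Ye-type) is that whenever $\pi$ is not optimal there is a state $s$ at which $\pi^*$ disagrees and whose advantage contributes at least a $\tfrac{1-\gamma}{|\states|}$-fraction of the global gap $\norm{\Vstar - \Vpi}_\infty$. Combining this lower bound on the ``useful'' advantage with the geometric decay of the gap shows that a strictly suboptimal state-action pair can be selected by GPI only during a bounded window of iterations: once $\norm{\Vstar - V^{(k)}}_\infty$ drops below the fixed positive advantage threshold associated with that pair, the value-monotonicity of Proposition~\ref{proposition:action_switch_monotone} forbids GPI from ever switching back into it. Hence each such pair is eliminated forever after at most $N = \bigO{\frac{1}{1-\gamma}\log\frac{1}{1-\gamma}}$ iterations, where $N$ solves $\gamma^N \le (1-\gamma)/|\states|$ and uses $\log\frac1\gamma \ge 1-\gamma$.

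Finally I would count. A naive tally multiplies $N$ by the number $|\states|(|\actions|-1)$ of suboptimal pairs; the sharper bound requires the amortized accounting of \citet{hansen2013strategy, scherrer2016improved}, which charges eliminations so that only $\bigO{|\actions|}$ phases of length $N$ are needed, producing the claimed $\bigO{\frac{|\actions|}{1-\gamma}\log\frac{1}{1-\gamma}}$. The main obstacle is precisely this crossover/amortization step: proving that a suboptimal action, once abandoned with a sufficiently small residual gap, can never be reselected, and organizing the eliminations so that the factor is $|\actions|$ rather than $|\states||\actions|$. The contraction in the first step is immediate from Lemma~\ref{lemma:v_diff_opt_k}; all the genuine difficulty lies in relating the per-state advantages to the global value gap and in exploiting GPI's single-state, value-monotone update rule (Proposition~\ref{proposition:action_switch_monotone}) to guarantee permanent elimination.
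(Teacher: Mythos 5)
Your first stage (the contraction $\norminf{\Dk} \le \gamma^k \norminf{\Dz}$ obtained by iterating Lemma~\ref{lemma:v_diff_opt_k}) coincides with the paper, but your crossover stage contains a genuine, quantitative gap. You invoke a Ye-type sub-lemma in which some disagreeing state carries a $\tfrac{1-\gamma}{|\states|}$-fraction of the global gap, so your elimination window $N$ must satisfy $\gamma^N \le (1-\gamma)/|\states|$. Solving that inequality gives $N = \Theta\parentheses{\frac{1}{1-\gamma}\log\frac{|\states|}{1-\gamma}}$, \emph{not} the $\bigO{\frac{1}{1-\gamma}\log\frac{1}{1-\gamma}}$ you assert: the $\log|\states|$ term does not vanish, and this route can only reproduce a Ye-style $|\states|$-dependent bound, never the claimed one. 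The paper's proof avoids any $|\states|$ factor by a purely per-state comparison: fix the single state $j$ attaining $\Dz(j) = \norminf{\Dz}$ and apply Eq.~\eqref{eq:lemma_v_adv} twice \emph{at that same state}, once to get $\norminf{\Dz} \le \frac{1}{1-\gamma}\parentheses{\Vstar - \Tki{0}{j}\Vstar}(j)$ and once to get $\parentheses{\Vstar - \Tki{k}{j}\Vstar}(j) \le \norminf{\Dk}$. Chaining these through the contraction yields $\parentheses{\Vstar - \Tki{k}{j}\Vstar}(j) \le \frac{\gamma^k}{1-\gamma}\parentheses{\Vstar - \Tki{0}{j}\Vstar}(j)$, so the elimination threshold is simply $\frac{\gamma^k}{1-\gamma} < 1$, with no per-state advantage fraction and no $|\states|$ anywhere.

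Your final stage likewise leaves the decisive step unproved: you defer the ``amortized accounting'' to Hansen--Zwick and Scherrer and explicitly flag it as the main obstacle, whereas in the paper this is exactly what the displayed inequality delivers. The residual $\parentheses{\Vstar - \bellop^{a}\Vstar}(j)$ of the fixed action $a = \pi^{(0)}(j)$ is a positive constant, while the residual of whatever action GPI holds at $j$ decays like $\frac{\gamma^k}{1-\gamma}$ times the initial residual; once $\frac{\gamma^k}{1-\gamma} < 1$ that specific non-optimal action can never again be the action at $j$, so it is eliminated permanently. (Note this mechanism is the residual comparison, not the value-monotonicity of Proposition~\ref{proposition:action_switch_monotone} by itself, which does not forbid re-selecting an abandoned action.) The paper then multiplies the window by the $\bigO{|\actions|}$ non-optimal actions directly. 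In short, your skeleton --- geometric contraction plus permanent elimination --- is the right shape and matches the paper's strategy, but the two load-bearing steps, the $|\states|$-free elimination threshold and the never-switch-back mechanism, are respectively incorrect and missing in your write-up, and your algebra as stated proves only the weaker bound $\bigO{\frac{|\actions|}{1-\gamma}\log\frac{|\states|}{1-\gamma}}$.
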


\begin{proof}
Define $\Dk \in \realset^{|\states|}$ with $\Dk(s) = \Vstar(s) - \Vki{k}{i}(s),\, \forall s\in \states$. Then, by Lemma~\ref{lemma:v_diff_opt_k}, we have 
\begin{align*}
    \Dk & \le \gamma \Pstar \Dkmo, \\
    \norminf{\Dk} & \le \gamma^k  \norminf{\Dz}.
\end{align*}
Let $j$ be the state such that $\Dz(j) = \norminf{\Dz}$, the following properties can be obtained by Eq.~\eqref{eq:lemma_v_adv} in Lemma~\ref{lemma:v_adv}.
\begin{align*}
    \norminf{\Dk} &\le \gamma^k  \norminf{\Dz} \\
    & \le \gamma^k \left\lVert\parentheses{I - \gamma \Pki{0}{j}}^{-1}\right\rVert_{\infty} \left\lVert\Vstar - \Tki{0}{j} \Vstar \right\rVert_{\infty} \\
    & = \frac{\gamma^k}{1 - \gamma} \left( \Vstar - \Tki{0}{j} \Vstar \right)(j).
\end{align*}
Also from Eq.~\eqref{eq:lemma_v_adv}, we have 
\begin{equation}
    \parentheses{\Vstar - \Tki{k}{j} \Vstar}(j) \le \Dk(j) \le \norminf{\Dk}.
\end{equation}
It follows that
\begin{align*}
    \left(\Vstar - \Tki{k}{j} \Vstar \right) (j) \le \frac{\gamma^k}{1 - \gamma} \left( \Vstar - \Tki{0}{j} \Vstar \right)(j),
\end{align*}
which implies when $\frac{\gamma^k}{1 - \gamma} < 1$, the non-optimal action for $j$ in $\pi^{(0)}$ is switched in $\pi^{(k)}$ and will never be switched back to in future iterations.
Now we are ready to bound $k$. By taking the logarithm for both sides of $\frac{\gamma^k}{1 - \gamma} < 1$, we have
\begin{align*}
    k \log \gamma & \ge \log (1 - \gamma) \\
    k & > \frac{\log(1 - \gamma)}{\log \gamma} = \frac{\log \frac{1}{1 - \gamma}}{\log \frac{1}{\gamma}} \\
    k & > \frac{1}{1 - \gamma}\log \frac{1}{1 - \gamma} \;\;\;\; \left(\log\frac{1}{\gamma} \ge \frac{\frac{1}{\gamma} - 1}{\frac{1}{\gamma}} = 1 - \gamma\right).
\end{align*}
Each non-optimal action is eliminated after at most $\bigO{\frac{1}{1 - \gamma}\log \frac{1}{1 - \gamma}}$ iterations, and there are $\bigO{|\actions|}$ non-optimal actions. Thus, GPI takes at most $\bigO{\frac{|\actions|}{1 - \gamma}\log \frac{1}{1 - \gamma}}$ iterations to reach the optimal policy.
\end{proof}

\begin{figure*}[t]
    \captionsetup[subfloat]{labelformat=empty}
    \centering
    \subfloat[]{\includegraphics[width=0.20\linewidth, height=3.1cm]{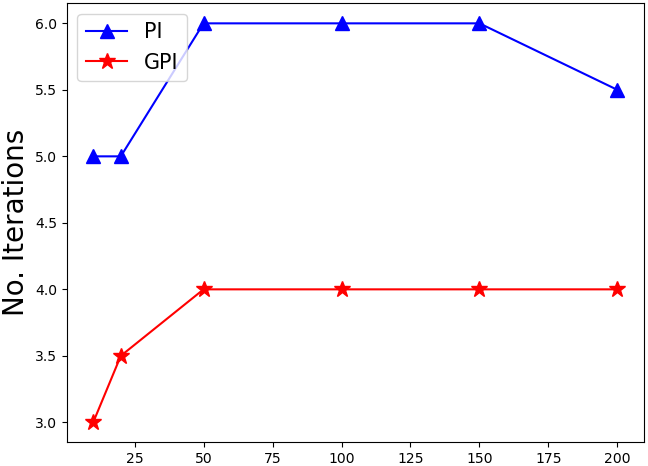}}
    \hfil
    \subfloat[]{\includegraphics[width=0.20\linewidth, height=3.1cm]{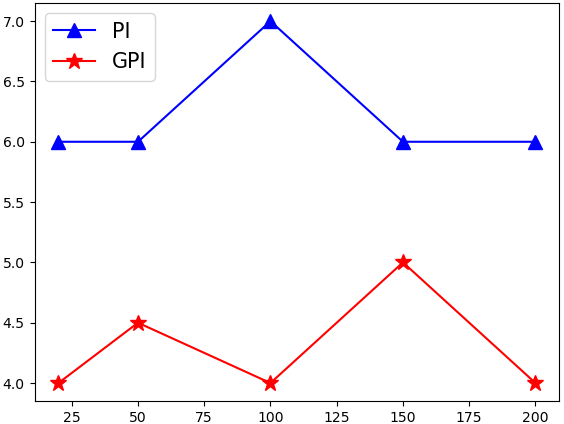}}
    \hfil
    \subfloat[]{\includegraphics[width=0.20\linewidth, height=3.1cm]{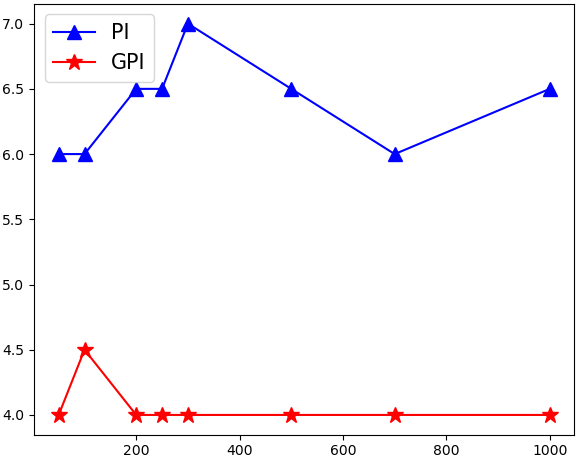}}
    \hfil
    \subfloat[]{\includegraphics[width=0.20\linewidth, height=3.1cm]{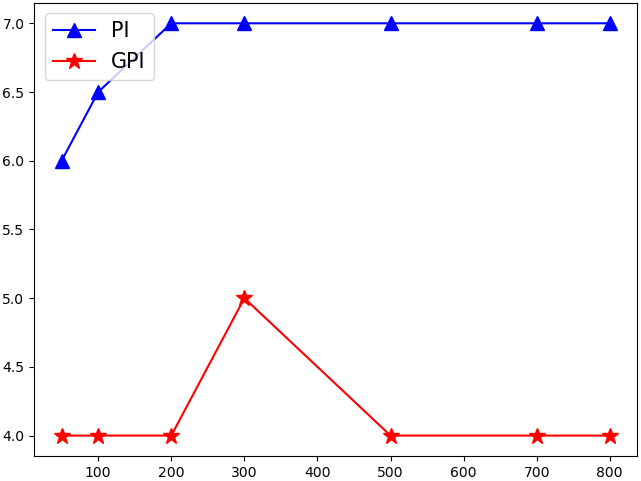}}
    \hfil
    \subfloat[]{\includegraphics[width=0.20\linewidth, height=3.1cm]{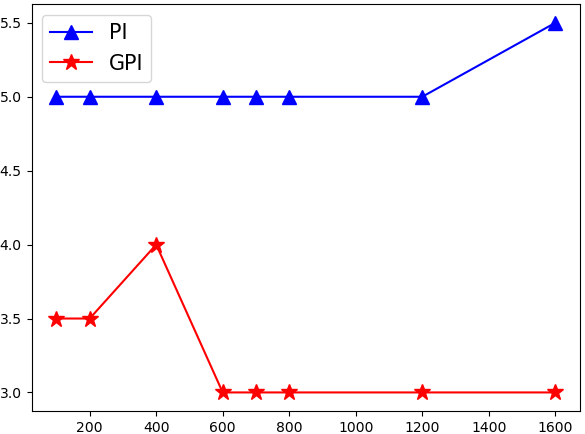}}
    \vspace{-7ex}
\end{figure*}
\begin{figure*}[t]
    \captionsetup[subfloat]{labelformat=empty}
    \centering
    \subfloat[]{\includegraphics[width=0.20\linewidth, height=3.1cm]{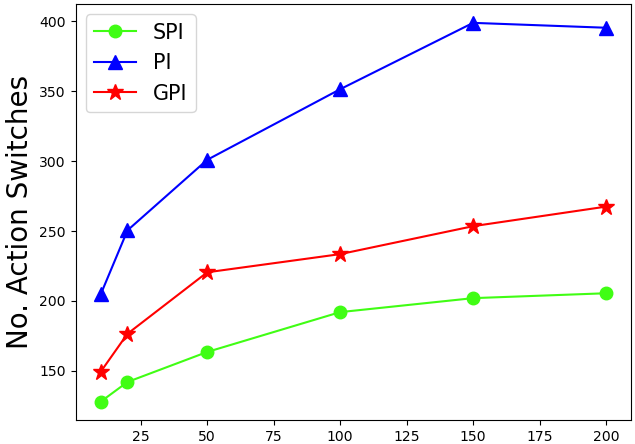}}
    \hfil
    \subfloat[]{\includegraphics[width=0.20\linewidth, height=3.1cm]{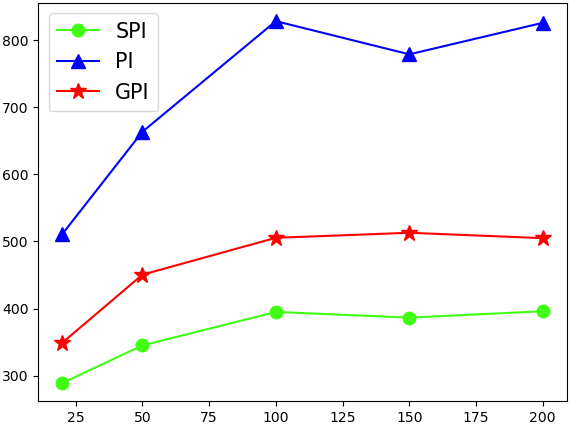}}
    \hfil
    \subfloat[]{\includegraphics[width=0.20\linewidth, height=3.1cm]{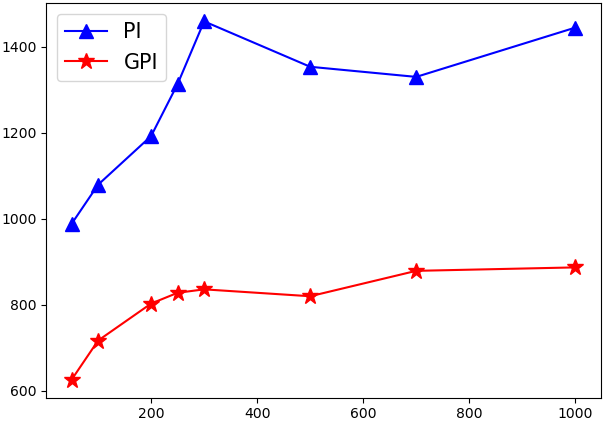}}
    \hfil
    \subfloat[]{\includegraphics[width=0.20\linewidth, height=3.1cm]{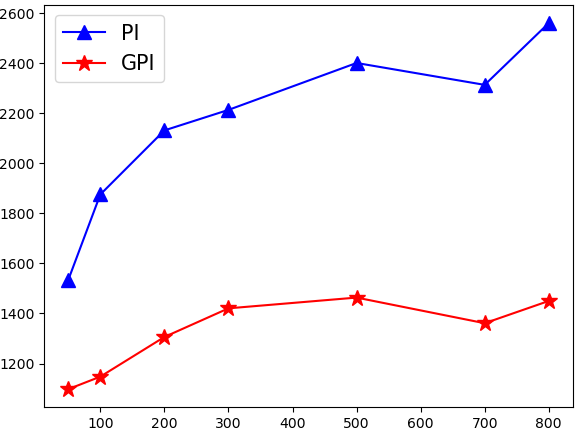}}
    \hfil
    \subfloat[]{\includegraphics[width=0.20\linewidth, height=3.1cm]{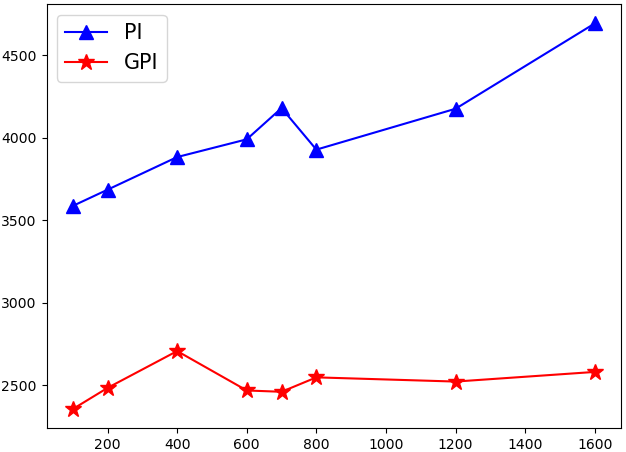}}
    \vspace{-7ex}
\end{figure*}

\begin{figure*}[t]
    % \captionsetup[subfloat]
    \centering
    \subfloat[]{\includegraphics[width=0.20\linewidth, height=3.1cm]{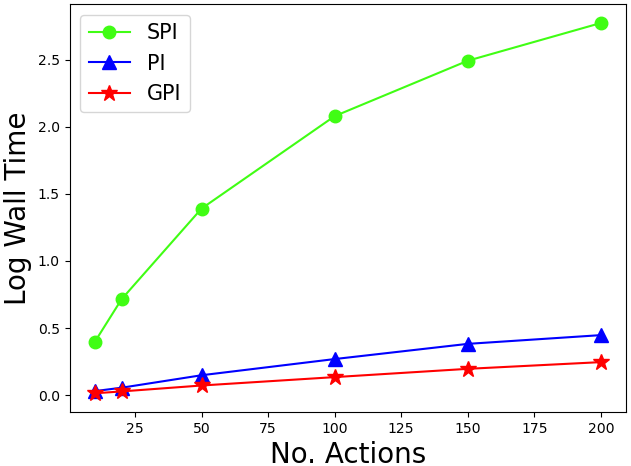}}
    \hfil
    \subfloat[]{\includegraphics[width=0.20\linewidth, height=3.1cm]{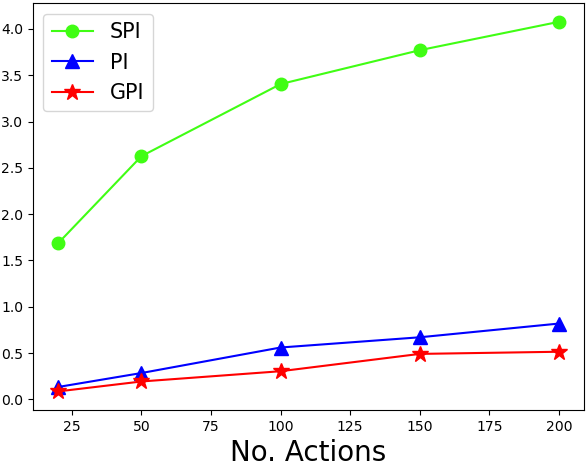}}
    \hfil
    \subfloat[]{\includegraphics[width=0.20\linewidth, height=3.1cm]{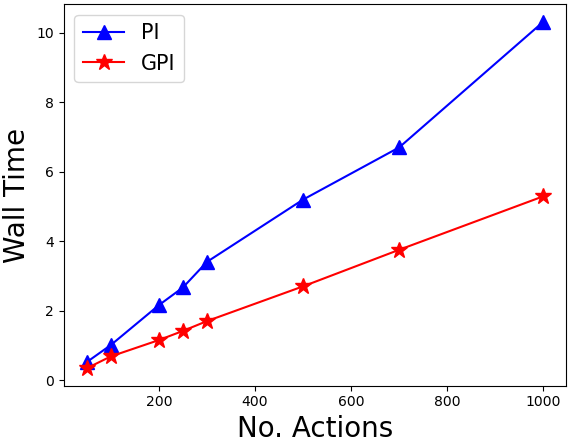}}
    \hfil
    \subfloat[]{\includegraphics[width=0.20\linewidth, height=3.1cm]{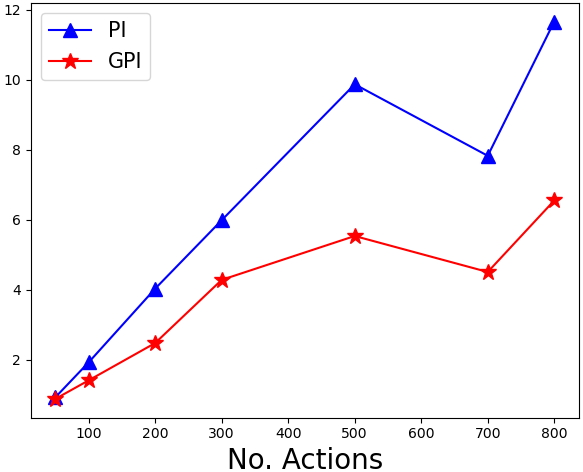}}
    \hfil
    \subfloat[]{\includegraphics[width=0.20\linewidth, height=3.1cm]{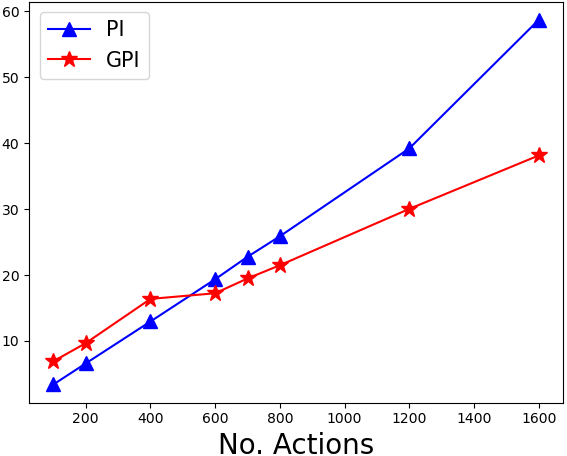}}
    \caption{The results of MDPs with $\{100, 200, 300, 500, 1000\}$ states in (a)-(e). The horizontal axes are the number of actions for all graphs. The vertical axes are the number of iterations, number of action switches, and wall time for the first to the third row, respectively. The performance curves of SPI, PI, and GPI are in green, blue, and red, respectively. The SPI curves are only presented in (a) and (b) to provide a ``lower bound" on the number of action switches, and are dropped for larger MDPs due to its higher running time. The number of switches of GPI remains low compared to PI. The proposed GPI consistently outperforms PI in both iteration count and wall time. The advantages of GPI become more significant as the action set size grows.}
    \label{fig:results_1}
\end{figure*}

\subsection{Asynchronous Geometric Policy Iteration}
When the state set is large it would be beneficial to perform policy updates in an orderless way~\cite{Sutton1998}. This is because iterating over the entire state set may be prohibitive, and exactly evaluating the value function with Eq.~\eqref{eq:bellman_eq} may be too expensive. Thus, in practice, the value function is often approximated when the state set is large. One example is modified policy iteration~\cite{puterman_modified_pi, puterman94markov} where the policy evaluation step is approximated with certain steps of value iteration. 

Since GPI avoids the matrix inversion by updating the value function incrementally, it has the potential to update the policy for arbitrary states available to the agent. This property also opens up the possibility of asynchronous (orderless) updates of policies and the value function when the state set is large or the agent has to update the policy for the state it encounters in real-time. The asynchronous update strategy can also help avoid being stuck in states that lead to minimal progress and may reach the optimal policy without reaching a certain set of states. 

Asynchronous GPI (Async-GPI) follows the action selection mechanism of GPI, and its general framework is as follows. Assume the transition matrix is available to the agent, we randomly initialize the policy $\pi^{(0)}$ and calculate the initial $\bQ^{(0)}$ and $V^{(0)}$ accordingly. In real-time settings, the sequence of states $\{s_0, s_1,s_2, \ldots \}$ are collected by an agent through real-time interaction with the environment. At time step $t$, we search for an action switch for state $s_t$ using Eq.~\eqref{eq:gpiv_action_selection}. Then, we update the $\pi^{(t)}$, $\bQ^{(t)}$ with Eq.~\eqref{eq:gpi_Q_update}, and $V^{(t)}$. Asynchronous value-based methods converge if each state is visited infinitely often~\cite{bertsekasAsyncVI}. We later demonstrate in experiments that Async-GPI converges well in practice.

\begin{figure*}[t]
    \centering
    \subfloat[]{\includegraphics[width=0.25\linewidth, height=3.2cm]{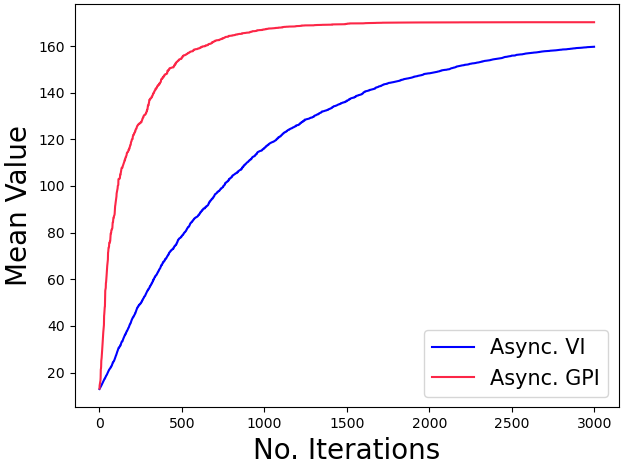}}
    \hfil
    \subfloat[]{\includegraphics[width=0.25\linewidth, height=3.2cm]{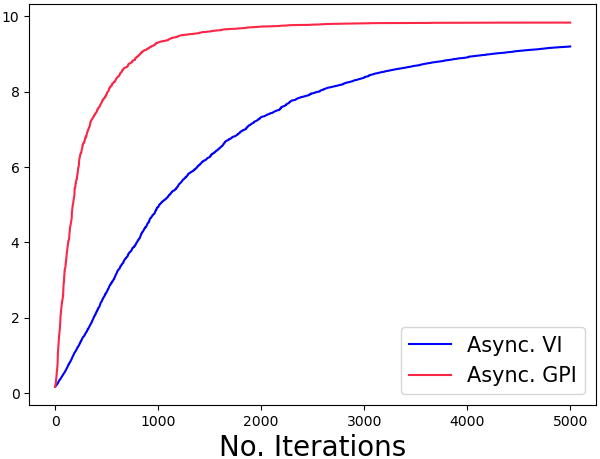}}
    \hfil
    \subfloat[]{\includegraphics[width=0.25\linewidth, height=3.2cm]{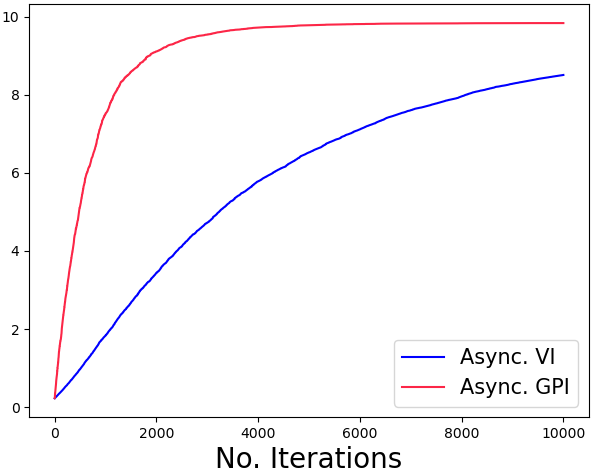}}
    \hfil
    \subfloat[]{\includegraphics[width=0.25\linewidth, height=3.2cm]{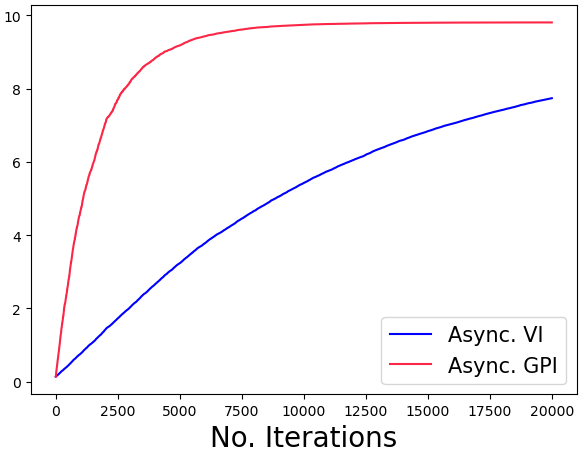}}
    \caption{Comparison between asynchronous geometric policy iteration (red curve) and asynchronous value iteration (blue curve) in 4 MDPs. $|\actions|=100$ for all MDPs and $|\states|=\{300, 500, 1000, 2000\}$ for (a)-(d), respectively. The horizontal axes are the number of updates. The vertical axes show the mean of the value function.}
    \label{fig:async}
\end{figure*}

\section{Experiments}
\label{sec:gpi_experiments}

We test GPI on random MDPs of different sizes. The baselines are policy iteration~(PI) and simple policy iteration~(SPI). We compare the number of iterations, actions switches, and wall time. Here we denote the number of iterations as the number of sweeps over the entire state set. Action switches are those policy updates within each iteration. The results are shown in Figure~\ref{fig:results_1}. We generate MDPs with $|\states| = \{100, 200, 300, 500, 1000\}$ corresponding to Figure~\ref{fig:results_1} (a)-(e). For each state size, we increase the number of actions (horizontal axes) to observe the difference in performance. The rows from the top to bottom are the number of iterations, action switches and wall time (vertical axes), respectively. Since SPI only performs one action switch per iteration, we only show its number of action switches. The purpose of adding SPI to the baseline is to verify if our GPI can effectively reduce the number of action switches. Since SPI sweeps over the entire state set and updates a single state with the largest improvement, it is supposed to have the least number of action switches. However, SPI's larger complexity of performing one update should lead to higher running time. This is supported by the experiments as Figure~\ref{fig:results_1} (a) and (b) show that SPI (green curves) takes the least number of switches and longest time. We drop SPI in Figure~\ref{fig:results_1} (c)-(e) to have a clearer comparison between GPI and PI (especially in wall time). 
The proposed GPI has a clear advantage over PI in almost all tests. The second row of Figure~\ref{fig:results_1} (a) and (b) shows that the number of action switches of GPI is significantly fewer than PI and very close to SPI although the complexity of a switch is cheaper by a factor of $|\states|$. And the reduction in the number of action switches leads to fewer iterations. Another important observation is that the margin increases as the action set becomes larger. This is strong empirical evidence that demonstrates the benefits of GPI's action selection strategy which is to reach the endpoints of line segments in the value function polytope. The larger the action set is, the more policies lying on the line segments and thus the more actions being excluded in one switch. The wall time of GPI is also very competitive compared to PI which further demonstrates that GPI can be a very practical algorithm for solving finite discounted MDPs.

We also test the performance of the asynchronous GPI (Async-GPI) on MDPs with $|\states| = \{300, 500, 1000, 2000\}$ and $|\actions| = 100$. For each setting, we randomly generate a sequence of states that is larger than $|\states|$. We compare Async-GPI with asynchronous value iteration (Async-VI) which is classic asynchronous dynamic programming algorithm. At time step $t$, Async-VI performs one step of the optimality Bellman operator on a single state $s_t$ that is available to the algorithm. The results are shown in Figure~\ref{fig:async}. The mean of the value function is plotted against the number of updates. We observe that Async-GPI took significantly fewer updates to reach the optimal value function. The gap becomes larger when the state set grows in size. These results are expected because Async-GPI also has a higher complexity to perform an update and Async-VI never really solves the real value function before reaching the optimality. 

%%%%%%%%%%%%%%%%%%%%%%%%%%%%%%%%%%%%%%%%%%%%%%%%%%%%%%
\section{Conclusions and Future Work}
In this paper, we discussed the geometric properties of finite MDPs. We characterized the hyperplane arrangement that includes the boundary of the value function polytope, and further related it to the MDP-LP polytope by showing that they share the same hyperplane arrangement. Unlike the well-defined MDP-LP polytope, it remains unclear which bounding hyperplanes are active and which halfspaces of them belong to the value space. Besides the conjecture stated earlier, we would like to understand in the future which cells of the hyperplane arrangement form the value function polytope, and may derive a bound on the number of convex cells. It is also plausible that the rest of the hyperplane arrangement will help us devise new algorithms for solving MDPs. 

Following the fact that policies that differ in only one state are mapped onto a line segment in the value function polytope, and that the only two policies on the polytope boundary are deterministic in that state, we proposed a new algorithm called geometric policy iteration that guarantees to reach an endpoint of the line segment for every action switch. We developed a mechanism that makes the value function monotonically increase with respect to action switches and the whole process can be computed efficiently. Our experiments showed that our algorithm is very competitive compared to the widely-used policy iteration and value iteration. We believe this type of algorithm can be extended to multi-agent settings, e.g., stochastic games~\cite{shapley1953stochastic}. It will also be interesting to apply similar ideas to model-based reinforcement learning. 

\begin{acks}
This work is supported by NSF DMS award 1818969 and a seed award from Center for Data Science and Artificial Intelligence Research at UC Davis. 
\end{acks}

% \clearpage
\bibliographystyle{ACM-Reference-Format}
\balance
\bibliography{sample-base}

\end{document}